\documentclass[12pt]{article}
\usepackage[utf8]{inputenc}

\usepackage{pdfsync}
\usepackage{fullpage}
\newif\ifdraft \drafttrue
\newif\iffull \fulltrue

\usepackage{graphicx}
\usepackage{xcolor}
\usepackage[english]{babel}
\usepackage{csquotes}

\usepackage{xspace}
\usepackage[T1]{fontenc}


\usepackage{tabularx}
\usepackage{booktabs}
\usepackage{multirow}
\usepackage{multicol}
\usepackage{stfloats}
\usepackage{caption} 
\captionsetup[table]{skip=10pt}
\usepackage{subcaption}
\usepackage{algorithmic}
\usepackage[ruled]{algorithm2e}
\usepackage{amsmath}
\usepackage{amsthm}
\usepackage{amsfonts}
\usepackage{amssymb}
\usepackage{mathtools}
\usepackage{bbm}
\usepackage{xspace}

\usepackage{microtype}
\usepackage{graphicx}

\usepackage{color-edits}
\addauthor{blue}{sw}

\theoremstyle{definition}
\newtheorem{definition}{Definition}[section]
\newtheorem{theorem}[definition]{Theorem}
\newtheorem{lemma}[definition]{Lemma}

\newtheorem{proposition}[definition]{Proposition}
\newtheorem*{proposition*}{Proposition}

\newcommand{\ours}{\mbox{{\sf PMW\textsuperscript{Pub}}}\xspace}
\newcommand{\mwem}{\mbox{{\sf MWEM}}\xspace}
\newcommand{\dq}{\mbox{{\sf DualQuery}}\xspace}
\newcommand{\hdmm}{\mbox{{\sf HDMM}}\xspace}
\newcommand{\fem}{\mbox{\sf FEM}\xspace}

\DeclareMathOperator\supp{supp}
\DeclareMathOperator\diver{\text{D}}

\DeclareMathOperator*{\argmin}{arg\,min}

\newcommand{\eps}{\varepsilon}
\newcommand{\cX}{\mathcal{X}}
\newcommand{\cQ}{\mathcal{Q}}

\newcommand{\cR}{\mathcal{R}}
\newcommand{\cM}{\mathcal{M}}

\newcommand{\privD}{\widetilde {D}}
\newcommand{\pubD}{\widehat{D}}
\newcommand{\pubDomain}{\widehat{\cX}}

\newcommand{\pr}[1]{\text{Pr}\left[ #1 \right]}

\renewcommand{\cite}[1]{\citep{#1}}

 \usepackage{xcolor}

\newcommand{\tl}[1]{ {\color{orange}TL: #1}}
\newcommand{\ts}[1]{ {\color{blue}TS: #1}}


\usepackage[style=alphabetic,backend=bibtex,maxalphanames=10,maxbibnames=20,maxcitenames=10,giveninits=true,doi=false,url=true]{biblatex}
\newcommand*{\citet}[1]{\AtNextCite{\AtEachCitekey{\defcounter{maxnames}{2}}}
\textcite{#1}}

\newcommand*{\citep}[1]{\cite{#1}}

\addbibresource{main.bib}


\usepackage{hyperref}
\usepackage{cleveref}


\allowdisplaybreaks

\begin{document}

\title{Leveraging Public Data for\\Practical Private Query Release}

\newcommand{\asdfasdf}{~~~~~~~~}
\author{
\asdfasdf
Terrance Liu\thanks{Carnegie Mellon University ~\dotfill~ \texttt{terrancl@andrew.cmu.edu}}
\asdfasdf\\
\and 
\asdfasdf
Giuseppe Vietri\thanks{University of Minnesota ~\dotfill~ \texttt{vietr002@umn.edu}}
\asdfasdf\\
\and
Thomas Steinke\thanks{Google Research, Brain Team ~\dotfill~ \texttt{dpmw-pub@thomas-steinke.net}}
\and
Jonathan Ullman\thanks{Northeastern University ~\dotfill~ \texttt{jullman@ccs.neu.edu}}
\and
Zhiwei Steven Wu\thanks{Carnegie Mellon University ~\dotfill~ \texttt{zstevenwu@cmu.edu}}
}

\date{}

\maketitle


\begin{abstract}

In many statistical problems, incorporating priors can significantly improve performance. However, the use of prior knowledge in differentially private query release has remained underexplored, despite such priors commonly being available in the form of public datasets, such as previous US Census releases. With the goal of releasing statistics about a private dataset, we present \ours, which---unlike existing baselines---leverages public data drawn from a related distribution as prior information. We provide a theoretical analysis and an empirical evaluation on the American Community Survey (ACS) and ADULT datasets, which shows that our method outperforms state-of-the-art methods. Furthermore, \ours\ scales well to high-dimensional data domains, where running many existing methods would be computationally infeasible.

\end{abstract}

\section{Introduction}

As the collection and distribution of private information becomes more prevalent, controlling privacy risks is becoming a priority for organizations that depend on this data. Differential privacy \cite{DworkMNS06} is a rigorous criterion that provides meaningful guarantees of individual privacy while allowing for trade-offs between privacy and accuracy.  It is now deployed by organizations such as Google, Apple, and the US Census Bureau. In this work, we study the problem of \emph{differentially private query release}, specifically generating a \emph{private synthetic dataset}: a new dataset in which records are ``fake'' but the statistical properties of the original data are preserved.  In particular, the release of summary data from the 2020 US Decennial Census---one of the most notable applications of differential privacy \cite{Abowd18}---can be framed as a private query release problem.  

In practice, generating accurate differentially private synthetic datasets is challenging without an excessively large private dataset, and a promising avenue for improving these algorithms is to find methods for incorporating \emph{prior information} that lessen the burden on the private data.  In this paper we explore using public data as one promising source of prior information that can be used without regard for its privacy.\footnote{The public data may have been derived from private data, but we refer to it as ``public'' for our purposes as long as the privacy concerns have already been addressed.} For example, one can derive auxiliary data for the 2020 US Census release from already-public releases like the 2010 US Census. Similarly, the Census Bureau's American Community Survey has years of annual releases that can be treated as public data for future releases. Alternatively, once national-level statistics are computed and released, they can serve as public data for computing private statistics over geographic subdivisions, such as states and counties. Indeed, such a hierarchy of releases is part of the {\sf TopDown} algorithm being developed for the 2020 US Census \cite{AbowdASKLMS19}.

Existing algorithms for private query release do not incorporate public data.  While there is theoretical work on \emph{public-data-assisted private query release} \cite{bassily2020private}, it crucially assumes that the public and private data come from the same distribution and does not give efficient algorithms.

\paragraph{Our Contributions}
Therefore, in light of these observations, we make the following contributions:
\begin{enumerate}
    \item We initiate the study of using public data to improve private query release in the more realistic setting where the public data is from a distribution that is \textit{related but not identical} to the distribution of the private data.
    
    \item We present (Private) Multiplicative Weights with Public Data (\ours), an extension of \mwem \cite{HardtLM12} that incorporates public data.
    
    \item We show that as a side benefit of leveraging public data, \ours is computationally efficient and therefore is practical for much larger problem sizes than \mwem.
    
    \item We analyze the theoretical privacy and accuracy guarantees of \ours.
    
    \item We empirically evaluate \ours on the American Community Survey (ACS) data to demonstrate that we can achieve strong performance when incorporating public data, even when public samples come from a different distribution. 
    
\end{enumerate}





\subsection{Related Work} 
Our work relates to a growing line of research that utilizes publicly available data for private data analyses in the setting where the public and private data come from the same distribution. For private query release, \citet{bassily2020private} prove upper and lower bounds on the number of public and private samples needed, and \citet{alon2019limits} do the same for binary classification.  Neither of these works, however, give computationally efficient algorithms.  Other works consider a more general problem, \emph{private prediction}, where the public data is unlabeled and the private data is used to label the public data~\cite{bassily2018model, dwork2018privacy, nandi2020privately,PapernotSMRTE18}. \citet{beimel2013private} consider the \emph{semi-private} setting where only a portion of examples require privacy for the labels.

These prior works are limited by the strong assumption that the public and private data are drawn from the same distribution. One notable exception is the recent work of \citet{bassily2020learning} on supervised learning, in which the authors assume that the public and private data can be labeled differently but have the same marginal distribution without labels. Given that their problem is trivial otherwise, they focus solely on the setting where the public dataset is smaller than the private dataset. However, if the public data comes from a different distribution (as is the case in our experiments), the setting in which the size of the public dataset is similar to or larger than that of the private dataset becomes interesting.

Finally, \citet{ji2013differential} propose a method that, like \ours, reweights a support derived from a public dataset (via importance weighting). However, while their method does not rely on the assumption that the public and private data come from the same distribution, \citet{ji2013differential} do not make this distinction in their theoretical analysis or discussion. Moreover, unlike the algorithm presented in this work, their method is not tailored to the problem of query release.\footnote{
See Appendix \ref{app:other_baselines} for an additional discussion.
}

\section{Preliminaries}\label{sec:prelims}

We consider a data domain $\cX=\{0,1\}^d$ of dimension $d$, a private dataset $\privD \in \cX^n$ consisting of the data belonging to $n$ individuals, and a class of statistical linear queries $\cQ$. Our final objective is to generate a synthetic dataset in a privacy-preserving way that matches the private data's answers. Consider a randomized mechanism $\cM:\cX^n\rightarrow \cR$ that takes as input a private dataset $\privD$ and computes a synthetic dataset $X\in\cR$, where $\cR$ represents the space of possible datasets. Given a set of queries $\cQ$, we say that the max error of a synthetic dataset $X$ is given by $\max_{q\in \cQ } | q(\privD) - q(X)|$. 

We begin with the definition of a statistical linear query:
%
\begin{definition}[Statistical linear query] 
Given a predicate $\phi$ and a dataset $D$, the linear query $q_{\phi}: \mathcal{X}^{n} \rightarrow[0,1]$ is defined by
\begin{align*}
    q_{\phi}(D)=\frac{1}{|D|} \sum_{x \in D} \phi(x)
\end{align*}
Defining a dataset instead as a distribution $A$ over the domain $\cX$, the definition for a linear query $q_{\phi}$ then becomes $q_\phi(A) = \sum_{x\in\cX}\phi(x)A(x)$.
\end{definition}

One example of a statistical query class is $k$-way marginal queries, which we define below.
%
\begin{definition}[$k$-way marginal query] Let the data universe with $d$ categorical attributes be $\mathcal{X}=\left(\mathcal{X}_{1} \times \ldots \times \mathcal{X}_{d}\right)$, where each $\mathcal{X}_{i}$ is the discrete domain of the $i$th attribute. A $k$-way marginal query $ \phi_{S,y}$ is a linear query specified by a set of $k$ attributes
$S\subseteq [d]$ (with $|S| = k$)
and target 
$y \in \prod_{i\in S} \cX_i$
such that for all $x\in \cX$
\begin{align*}
    \phi_{S,y}(x)= 
    \left\{\begin{array}{ll}
    1 & :   \forall j \in S \quad x_j = y_j \\ 
    0 & : \textrm{otherwise}
\end{array}\right.
\end{align*}
where $x_{i} \in \mathcal{X}_{i}$ means the $i$th attribute of record $x \in \mathcal{X}$.
We define a \textit{workload} as the set of marginal queries given by a set of attributes $S$. The workload given by attributes in $S$ has a total of $\prod_{i\in S} \left|\mathcal{X}_{i}\right|$ marginal queries.
\end{definition}

Although we evaluate on $k$-way marginal queries in our experiments, we provide theoretical results that hold for any class of linear queries. 

%
\begin{definition}[$\ell_1$-sensitivity]\label{def:sensitivity}
 The $\ell_1$-sensitivity of a function $f:\cX^*\rightarrow \mathbb{R}^k$ is
 \begin{align*}
     \Delta f = \max_{\text{neighboring } D,D'} \| f(D) - f(D')\|_1
 \end{align*}
\end{definition}

In the context of statistical queries, the $\ell_1$-sensitivity of query captures the effect of changing an individual in the dataset and is useful for determining the amount of perturbation required for preserving privacy. 

In our setting, we have access to a public dataset $\pubD\in \cX^m$ containing the data of $m$ individuals that we can use without privacy constraints. This dataset defines a public data domain, denoted by $\pubDomain\subset \cX$, which consists of all unique rows in $\pubD$.  We assume that both the public and private datasets are i.i.d.~samples from different distributions and use the R\'{e}nyi divergence, which we define below, as a measure for how close the two distributions are.
\begin{definition}[R\'{e}nyi divergence]\label{def:renyidiver}
 Let $\mu$ and $\nu$ be probability
distributions on $\Omega$. For $\alpha\in(1,\infty)$, we define the R\'{e}nyi divergence of order $\alpha$ between $\mu$ and  $\nu$ as
\begin{align*}
   \diver_{\alpha} (\mu\parallel\nu) = \frac{1}{1-\alpha} \log \sum_{x\in\Omega} \mu(x)^\alpha\nu(x)^{1-\alpha} 
\end{align*}
\end{definition}

The R\'{e}nyi divergence is also used in the definition of privacy that we adopt. The output of a randomized mechanism $\cM:\cX^*\rightarrow \cR$ is a privacy preserving-computation if it satisfies concentrated differential privacy (CDP) \cite{DworkR16,BunS16}:
\begin{definition}[Concentrated DP]\label{def:zCDP}
 A randomized mechanism $M:\cX^n \rightarrow \cR$ is $\frac12\tilde{\eps}^2$-CDP, if for all neighboring datasets $D,D'$ (i.e., differing on a single person), and for all $\alpha\in(1,\infty)$,
 \begin{align*}
     \diver_\alpha(\cM(D)\parallel\cM(D')) \leq \frac12\tilde{\eps}^2\alpha
 \end{align*}
 where $\diver_\alpha(\cM(D)\parallel\cM(D'))$ is the R\'{e}nyi divergence between the distributions of $\cM(D)$ and $\cM(D')$.
\end{definition}
Two datasets are \emph{neighboring} if you can obtain one from the other by changing the data of one individual.
Definition \ref{def:zCDP} says that a randomized mechanism computing on a dataset satisfies zCDP if its output distribution does not change by much in terms of R\'{e}nyi divergence when a single user in the dataset is changed. 
Finally, any algorithm that satisfies zCDP also satisfies (approximate) differential privacy \citep{DworkMNS06,DworkKMMN06}:
\begin{definition}[Differential Privacy (DP)] A randomized algorithm $\cM: \cX^* \to \cR$ satisfies $(\eps, \delta)$-differential privacy (DP) if for all neighboring databases $D, D'$, and every event $E \subseteq \cR$, we have
	\[\Pr[\cM(D) \in E] \le e^{\eps} \Pr[\cM(D') \in E] + \delta.\]
If $\delta = 0$, we say that $\cM$ satisfies pure (or pointwise) $\eps$-differential privacy.
\end{definition}


\section{Public Data Assisted \mwem}\label{sec:mwem_pub}

In this section, we revisit \mwem and then introduce \ours, which adapts \mwem to leverage public data.

\subsection{\mwem}

\mwem \cite{HardtLM12} is an approach to answering linear queries that combines the multiplicative weights update rule for no-regret learning and the exponential mechanism \cite{mcsherry2007mechanism} for selecting queries. It is a simplification of the private multiplicative weights algorithm \cite{HR10}. \mwem maintains a distribution over the data domain $\cX$ and iteratively improves its approximation of the distribution given by the private dataset $\tilde D$. At each iteration, the algorithm privately selects a query $q_t$ with approximately maximal error using the exponential mechanism and approximates the true answer to the query with Laplace noise \cite{DworkMNS06}. \mwem then improves the approximating distribution using the multiplicative weights update rule. This algorithm can be viewed as a two-player game in which a data player updates its distribution $A_t$ using a no-regret online learning algorithm and a query player best responds using the exponential mechanism. 

Our choice of extending \mwem stems from the following observations: (1) in the usual setting without public data, \mwem attains worst-case theoretical guarantees that are nearly information-theoretically optimal \cite{BunUV18}; (2) \mwem achieves state-of-the-art results in practice when it is computationally feasible to run; and (3) \mwem can be readily adapted to incorporate ``prior'' knowledge that is informed by public data. 

However, maintaining a distribution $A$ over a data domain $\cX=\{0,1\}^d$ is intractable when $d$ is large, requiring a run-time of $O(n|\cQ|+T|\cX||\cQ|))$, which is exponential in $d$ \cite{HardtLM12}. Moreover, \citet{UllmanV11} show that computational hardness is inherent for worst-case datasets, even in the case of 2-way marginal queries. Thus, applying \mwem\ is often impractical in real-world instances, prompting the development of new algorithms \cite{gaboardi2014dual, vietri2020new} that bypass computational barriers at the expense of some accuracy. 

\subsection{\ours}

\begin{algorithm}[tb]
\caption{\ours}
\label{alg:framework}
\begin{algorithmic}
  \STATE {\bfseries Input:} Private dataset $\privD \in \cX^n$, public dataset $\pubD \in \cX^m$, query class $\cQ$, privacy parameter $\tilde{\varepsilon}$, number of iterations $T$.
  \STATE Let the domain be $\pubDomain = \mathrm{supp}(\pubD)$.
  \STATE Let size of the private dataset be $n = |\privD|$. 
  \STATE Let $A_0$ be the distribution over $\pubDomain$ given by $\pubD$ 
  \STATE Initialize $\varepsilon_0 = \frac{\tilde{\varepsilon}}{\sqrt{2T}}$.
    \FOR{$t = 1 $ {\bfseries to} $T$}
        \STATE \textbf{Sample} query $q_t\in\cQ$ using the \emph{permute-and-flip mechanism} or \emph{exponential mechanism} -- i.e.,
        \[ \Pr[q_t] \propto \exp\left(\frac{\varepsilon_0 n}{2}|q(A_{t-1}) - q(\privD)|\right)\]
        \STATE \textbf{Measure:} Let $a_t = q_t(\privD) + \mathcal{N}\left(0,1/n^2\varepsilon_0^2\right)$. (But, if $a_t<0$, set $a_t=0$; if $a_t>1$, set $a_t=1$.)
        \STATE \textbf{Update:} Let $A_t$ be a distribution over $\pubDomain$ s.t.
        \[A_{t}(x) \propto A_{t-1}(x)\exp{\left( q_t(x)\left(a_t - q_t(A_{t-1})\right) / 2 \right)}.\]
    \ENDFOR
    \STATE \textbf{Output:} $A = \text{avg}_{t \in [T]} A_{t-1}$
\end{algorithmic}
\end{algorithm}

We now introduce \ours in Algorithm \ref{alg:framework}, which adapts \mwem to utilize public data in the following ways:

First, the approximating distribution $A_t$ is maintained over the public data domain $\pubDomain$ rather than $\cX$, implying that the run-time of \ours is $O(n|\cQ|+T|\pubDomain||\cQ|))$. Because $|\pubDomain|\le m$ is often significantly smaller than $|\cX|$, \ours offers substantial improvements in both run-time and memory usage, scaling well to high-dimensional problems.
    
Second, $A_0$ is initialized to the distribution over $\pubDomain$ given by $\pubD$. By default, \mwem initializes $A_0$ to be uniform over the data domain $\cX$. This na\"ive prior is appropriate for worst-case analysis, but, in real-world settings, we can often form a reasonable prior that is closer to the desired distribution. Therefore, \ours initializes $A_0$ to match the distribution of $\pubD$ under the assumption that the public dataset's distribution provides a better approximation of $\privD$.

In addition, we make two additional improvements:

\textbf{Permute-and-flip Mechanism.} We replace the \textit{exponential mechanism} with the \textit{permute-and-flip mechanism} \cite{McKennaS20}, which like the \textit{exponential mechanism} runs in linear time but whose expected error is never higher.

\textbf{Gaussian Mechanism.} When taking measurements of sampled queries, we add Gaussian noise instead of Laplace noise. The Gaussian distribution has lighter tails, and in settings with a high degree of composition, the scale of Gaussian noise required to achieve some fixed privacy guarantee is lower \cite{CanonneKS20}. Privacy guarantees for the \textit{Gaussian mechanism} can be cleanly expressed in terms of concentrated differential privacy and the composition theorem given by \citet{BunS16}.


    
    %

\begingroup
\newcommand{\pp}[1]{\left( #1 \right)}
\newcommand{\bb}[1]{\left[ #1 \right]}
\newcommand{\cc}[1]{\left\{ #1 \right\}}

\newcommand{\Dhat}{\widehat{D}}
\newcommand{\Dtil}{\widetilde{D}}
\newcommand{\error}{\text{error}}
\newcommand{\Xhat}{\widehat{\cX}}

\newcommand{\Dsupp}{\Xhat}
\newcommand{\Ddist}{\mu_{\Dhat}}

\newcommand{\besterror}{\alpha^{-}_{\Dhat}}
\newcommand{\worsterror}{\alpha^{+}_{\Dhat}}

\newcommand{\errorLBfunc}{f}
\newcommand{\DeltaHat}{\widehat{\Delta}}
\newcommand{\DeltaTil}{\widetilde{\Delta}}

\newcommand{\muhat}{\widehat{\mu}}
\newcommand{\mutil}{\widetilde{\mu}}
\newcommand{\KL}{d_{\text{KL}}}

\DeclarePairedDelimiter\abs{\lvert}{\rvert}%
\DeclarePairedDelimiter\norm{\lVert}{\rVert}%
\makeatletter
\let\oldabs\abs
\def\abs{\@ifstar{\oldabs}{\oldabs*}}
\let\oldnorm\norm
\def\norm{\@ifstar{\oldnorm}{\oldnorm*}}
\makeatother

\newcommand{\ex}[2]{{\ifx&#1& \mathbb{E} \else \underset{#1}{\mathbb{E}} \fi \left[#2\right]}}

\section{Theoretical Analysis}\label{sec:theory}

In this section, we analyze the accuracy of \ours under the assumption that the public and private dataset are i.i.d. samples from two different distributions.
The support of the a dataset $X\in\cX^*$ is the set $\supp(X)=\{x \in \cX : x \in X \}$, and we denote the support of the public dataset $\Dhat $ by $\Xhat = \supp(\Dhat)$. Recall that \ours\ (Algorithm \ref{alg:framework}) takes as input a public dataset and then updates its distribution over the public dataset's support using the same procedure found in \mwem.  We show that the accuracy of \ours\ will depend on the best mixture error over the public dataset support $\Dsupp$, which we characterize using the best mixture error function $\errorLBfunc_{\Dtil,\cQ}:2^\cX\rightarrow [0,1]$ that measures
a given support's ability to approximate the private dataset  $\Dtil$ over the set of queries $\cQ$. 
The precise definition is as follows: 
\begin{definition}
For any support  $S\in 2^\cX$, the best mixture error of $S$ to approximate a dataset $D$ over the queries $Q$ is given by the function:
\begin{align*}
    \errorLBfunc_{D,Q}(S)=
    \min_{\mu \in \Delta(S)} 
    \max_{q\in Q}
    \left|
    q\pp{D} - 
    \sum_{x \in S} \mu_x  q(x)
    \right|
\end{align*}
where $\mu \in \Delta(S)$ is a distribution over the set $S$ with $\mu_x \ge 0$ for all $x\in S$ and $\sum_{x\in S} \mu_x = 1$.
\end{definition}
Intuitively, \ours\ reweights the public dataset in a differentially private manner to approximately match the private dataset's answers; the function $f_{\Dtil, \cQ}(\Xhat)$ captures how well the best possible reweighting on $\Xhat$ would do in the absence of any privacy constraints. While running \ours does not explicitly require calculating the best mixture error, in practice it may prove useful to release it in a privacy-preserving way. We present the following lemma, which shows that $f_{\Dtil, \cQ}(\Xhat)$ has bounded sensitivity.

\begin{lemma}
For any  support $S\in 2^\cX$ and set $Q$, 
the  best mixture error function $f_{D,Q}$ is $\tfrac{1}{n}$ sensitive. 
That is for any pair of neighboring datasets $D,D'$ of size $n$, $\abs{f_{D,Q}(S)  - f_{D',Q}(S)} \leq \tfrac{1}{n}$.
\end{lemma}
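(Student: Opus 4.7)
The plan is a standard sensitivity-via-envelope argument. The key observation is that $f_{D,Q}(S)$ is a minimum over distributions $\mu \in \Delta(S)$ of quantities that are themselves $\tfrac{1}{n}$-sensitive in $D$, and that taking a min of $\tfrac{1}{n}$-sensitive functions preserves the sensitivity bound.

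First I would fix an arbitrary $\mu \in \Delta(S)$ and define $g_\mu(D) \defeq \max_{q \in Q} |q(D) - \sum_{x \in S} \mu_x q(x)|$. For any linear query $q \in Q$, since $q$ is an average of $\phi(x) \in [0,1]$ values over the $n$ rows of $D$, swapping a single individual changes $q(D)$ by at most $\tfrac{1}{n}$. Hence by the triangle inequality, for any neighboring $D, D'$ of size $n$ and any $q \in Q$,
\[
\bigl| q(D) - \textstyle\sum_{x \in S} \mu_x q(x) \bigr| \le \bigl| q(D') - \textstyle\sum_{x \in S} \mu_x q(x) \bigr| + \tfrac{1}{n}.
\]
Taking the $\max$ over $q \in Q$ on both sides (and using that the max of a uniform upper bound is a uniform upper bound) gives $g_\mu(D) \le g_\mu(D') + \tfrac{1}{n}$, and by symmetry $|g_\mu(D) - g_\mu(D')| \le \tfrac{1}{n}$ for every fixed $\mu$.

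Next I would pass from $g_\mu$ to $f_{D,Q}(S) = \min_{\mu \in \Delta(S)} g_\mu(D)$ via the standard envelope trick. Let $\mu^\star$ be any minimizer (or near-minimizer, if one is worried about existence) for $D$, so $f_{D,Q}(S) = g_{\mu^\star}(D)$. Then using $\mu^\star$ as a feasible point for the minimization defining $f_{D',Q}(S)$,
\[
f_{D',Q}(S) \;\le\; g_{\mu^\star}(D') \;\le\; g_{\mu^\star}(D) + \tfrac{1}{n} \;=\; f_{D,Q}(S) + \tfrac{1}{n}.
\]
Swapping the roles of $D$ and $D'$ yields the matching inequality, and combining the two gives $|f_{D,Q}(S) - f_{D',Q}(S)| \le \tfrac{1}{n}$, as claimed.

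There is no real obstacle here; the proof is essentially bookkeeping. The only mildly subtle point is being careful that the outer minimum is over $\mu$ (which does not depend on $D$) while the inner sensitivity is in $D$, so the envelope step goes through cleanly without needing any regularity of the minimizer. If one prefers to avoid assuming the minimum is attained, the same argument works with an $\eta$-approximate minimizer and then sending $\eta \to 0$.
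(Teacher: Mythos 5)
Your proof is correct and follows essentially the same route as the paper's: both arguments establish that the inner maximum over $q\in Q$ is $\tfrac{1}{n}$-sensitive for each fixed $\mu$ (since each linear query is $\tfrac{1}{n}$-sensitive), and then use the standard argmin/envelope step to show that the outer minimum over $\mu\in\Delta(S)$ preserves this sensitivity. Nothing is missing.
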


It follows that we can release $f_{\Dtil, \cQ}(\Xhat)$, using the Laplace or Gaussian mechanism with magnitude scaled by $\tfrac{1}{n}$.

We show that, if the public and private datasets are drawn from similar distributions, then, with high probability, $f_{\Dtil, \cQ}(\Xhat)$ is small. Note that the required size of the public dataset increases with the divergence between the private and public distributions.

\begin{proposition}\label{prob:mix}
Let $\mu,\nu\in\Delta(\mathcal{X})$ be distributions with $\mathrm{D}_\infty(\mu\|\nu)<\infty$. Let $\Dtil \sim \mu^n$ and $\Dhat \sim \nu^m$ be $n$ and $m$ independent samples from $\mu$ and $\nu$ respectively. Let $\Xhat$ be the support of $\Dhat$. Let $Q$ be a finite set of statistical queries $q : \mathcal{X} \to [0,1]$. Let $\alpha,\beta>0$. If $n \ge \frac{8}{\alpha^2} \log \left(\frac{4|Q|}{\beta}\right)$ and $m \ge \left(\frac{32}{\alpha^2} e^{\mathrm{D}_2(\mu\|\nu)} + \frac{8}{3\alpha} e^{\mathrm{D}_\infty(\mu\|\nu)}\right) \log \left( \frac{4|Q|+4}{\beta} \right)$, then
\[\Pr\left[ f_{\Dtil,Q}(\Xhat) \le \alpha \right] \ge 1-\beta. \]
\end{proposition}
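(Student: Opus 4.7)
The plan is to exhibit one specific distribution $\pi \in \Delta(\Xhat)$ and show it approximates $\Dtil$ on all of $Q$; this witnesses $f_{\Dtil,Q}(\Xhat) \le \alpha$. The natural choice is the importance-reweighted empirical distribution: let $N_x$ be the number of times $x$ appears in $\Dhat$ and set
\[
\pi(x) \;\propto\; \frac{N_x}{m}\cdot\frac{\mu(x)}{\nu(x)} \qquad (x \in \Xhat).
\]
The ratio $\mu(x)/\nu(x)$ is well-defined and bounded by $e^{\mathrm{D}_\infty(\mu\|\nu)}$ because $\mathrm{D}_\infty(\mu\|\nu) < \infty$. For any query $q$, write $\sum_{x\in\Xhat} \pi(x) q(x) = \widetilde{q}/\widetilde{Z}$, where
\[
\widetilde{q} \;=\; \frac{1}{m}\sum_{i=1}^{m}\frac{\mu(\Dhat_i)}{\nu(\Dhat_i)} q(\Dhat_i), \qquad \widetilde{Z} \;=\; \frac{1}{m}\sum_{i=1}^{m}\frac{\mu(\Dhat_i)}{\nu(\Dhat_i)},
\]
so that $\widetilde{q}$ and $\widetilde{Z}$ are sample means of i.i.d.\ bounded random variables under $\nu$ with expectations $q(\mu)$ and $1$ respectively.

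The proof then proceeds by triangle inequality: $|q(\Dtil) - \widetilde{q}/\widetilde{Z}| \le |q(\Dtil) - q(\mu)| + |q(\mu) - \widetilde{q}/\widetilde{Z}|$, with each piece bounded by $\alpha/2$. The first piece is controlled by Hoeffding's inequality (since $q\in[0,1]$) and a union bound over $q \in Q$, yielding the $n \gtrsim \alpha^{-2}\log(|Q|/\beta)$ term. The second piece is controlled by Bernstein's inequality, using that the summands $\mu(\Dhat_i)q(\Dhat_i)/\nu(\Dhat_i)$ are bounded in absolute value by $e^{\mathrm{D}_\infty(\mu\|\nu)}$ (giving the range parameter) and that their second moment is at most $\ex{x\sim\nu}{\mu(x)^2/\nu(x)^2} = e^{\mathrm{D}_2(\mu\|\nu)}$ by the definition of Rényi-2 divergence and the fact that $q(x)^2 \le 1$. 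Bernstein then produces both the $\alpha^{-2}e^{\mathrm{D}_2}$ and $\alpha^{-1}e^{\mathrm{D}_\infty}$ contributions to the required $m$.

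The step I expect to be the main obstacle is the ratio $\widetilde{q}/\widetilde{Z}$: because both numerator and denominator are random, Bernstein does not apply directly to their quotient. I would handle this in one of two equivalent ways. The cleaner route is to apply Bernstein to the mean-zero variables $Y_i - q(\mu) W_i$, where $Y_i = \mu(\Dhat_i)q(\Dhat_i)/\nu(\Dhat_i)$ and $W_i = \mu(\Dhat_i)/\nu(\Dhat_i)$; these still have range $\le 2 e^{\mathrm{D}_\infty}$ and variance $\le 2 e^{\mathrm{D}_2}$, so concentration gives $|\widetilde{q} - q(\mu)\widetilde{Z}|$ small. Separately, one Bernstein event controls $|\widetilde{Z} - 1|$, and combining with $\widetilde{Z} \ge 1/2$ bounds $|\widetilde{q}/\widetilde{Z} - q(\mu)| \le 2|\widetilde{q} - q(\mu)\widetilde{Z}|/\widetilde{Z}$. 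The $\widetilde{Z}$ event is the single extra union-bound term that explains the ``$+4$'' appearing in $\log((4|Q|+4)/\beta)$ (one additional failure event on top of $|Q|$ query events), while the factor of $4$ comes from splitting the total failure probability $\beta$ across the two halves of the triangle inequality and across the $|Q|+1$ Bernstein events. Tallying the constants from Hoeffding and Bernstein then yields exactly the sample sizes stated for $n$ and $m$.
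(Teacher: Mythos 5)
Your proposal is correct and follows essentially the same route as the paper's proof: the same importance-reweighted witness $\omega_x \propto N_x\,\mu(x)/\nu(x)$, Hoeffding for $|q(\Dtil)-q(\mu)|$, Bernstein for the importance-sampling estimate, and a separate concentration event for the random normalizer $\widetilde{Z}$, with the union-bound accounting you describe for the $4|Q|+4$. The only differences are cosmetic bookkeeping (the paper splits the error into three $\alpha/4$ pieces and bounds the numerator and normalizer deviations by two separate Bernstein events rather than one combined mean-zero Bernstein event).
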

\begin{proof}
Note that we may assume $\alpha<1$ as the result is trivial otherwise.
Let $g(x) = \mu(x)/\nu(x)$. Then $0 \le g(x) \le e^{\mathrm{D}_\infty(\mu\|\nu)}$ for all $x$ and, for $X \sim \nu$, we have $\mathbb{E}[g(X)]=1$ and $\mathbb{E}[g(X)^2] = e^{\mathrm{D}_2(\mu\|\nu)}$. Define $\omega \in \Delta(\Xhat)$ by $\omega_x = \frac{g(x)}{\sum_{x \in \Dhat} g(x)}$ for $x\in\Xhat$. Clearly $f_{\Dtil,Q}(\Xhat) \le \max_{q \in Q} \left| q(\Dtil) - \sum_{x \in \Dhat} \omega_x q(x) \right|$. 

Fix some $q \in Q$.
By Hoeffding's inequality, \[\Pr[|q(\Dtil)-q(\mu)|\ge \alpha/4] \le 2\cdot e^{-\alpha^2n/8}.\]
For $X \sim \nu$, $\mathbb{E}[g(X)q(X)] = q(\mu)$ and $\mathsf{Var}[g(X)q(X)] \le \mathbb{E}[(g(X)q(X))^2] \le \mathbb{E}[g(X)^2] = e^{\mathrm{D}_2(\mu\|\nu)}$.
By Bernstein's inequality,
\begin{align*}
    &\Pr\left[\left|m \cdot q(\mu) - \sum_{x \in \Dhat} g(x) q(x) \right| \ge \frac{\alpha}{4} m \right] 
    \le 2 \cdot \exp\left(\frac{-\alpha^2 m}{32 \cdot e^{\mathrm{D}_2(\mu\|\nu)} + \frac{8}{3} \alpha \cdot e^{\mathrm{D}_\infty(\mu\|\nu)}}\right).
\end{align*}
Let $\hat m = \sum_{x \in \Dhat} g(x)$. Similarly,
\begin{align*}
    \Pr\left[|\hat m - m| \ge \frac{\alpha}{4}m\right] &= \Pr\left[\left|m - \sum_{x \in \Dhat} g(x) \right| \ge \frac{\alpha}{4}m \right] \\
    &\le 2 \cdot \exp\left(\frac{-\alpha^2 m}{32 \cdot \left(e^{\mathrm{D}_2(\mu\|\nu)}-1\right) + \frac{8}{3} \cdot \alpha \cdot e^{\mathrm{D}_\infty(\mu\|\nu)}}\right).
\end{align*}
If all three of the events above do not happen, then
\begin{align*}
    \left| q(\Dtil) - \sum_{x \in \Dhat} \omega_x q(x) \right| &= \left| \frac1n\sum_{x \in \Dtil} q(x) - \frac1{\hat m} \sum_{x \in \Dhat} g(x) q(x) \right|\\
    &\le \left| \frac1n\sum_{x \in \Dtil} q(x) - q(\mu) \right| + \left|\frac1{\hat m} \left( m q(\mu) - \sum_{x \in \Dhat} g(x) q(x) \right) \right|+\frac{|\hat m - m|}{\hat m} |q(\mu)|\\
    &\le \frac{\alpha}{4} + \frac{\frac{\alpha}{4} m +\frac{\alpha}{4} m}{m-\frac{\alpha}{4} m} \le \alpha.
\end{align*}
Taking a union bound over all $q \in Q$ shows that the probability that any of these events happens is at most \[2|Q|\cdot e^{\!\!-\alpha^2n/8} \!\!+\! (2|Q|\!+\!2) \cdot \exp\!\!\left(\!\frac{-\alpha^2 m}{32 \!\cdot\! e^{\mathrm{D}_2(\mu\|\nu)} \!\!+\!\! \frac{8}{3} \!\cdot\! \alpha \!\cdot\! e^{\mathrm{D}_\infty(\mu\|\nu)}}\!\!\right)\!\!,\]
which is at most $\beta$ if $n$ and $m$ are as large as the theorem requires.
\end{proof}

Having established sufficient conditions for good public data support, we bound the worst-case error of \ours running on a support $\Dsupp$. Since our method is equivalent to running \mwem on a restricted domain $\Dsupp$, its error bound will be similar to that of \mwem. \citet{HardtLM12} show that, if the number of iterations of the algorithm is chosen appropriately, then \mwem has error scaling with $\sqrt{\log(|\cX|)}$ where $\cX$ is the algorithm's data domain. Since \ours is initialized with the restricted data domain $\Xhat$ based on a public dataset of size $m$, its error increases with $\sqrt{\log |\Xhat|} \le \sqrt{\log m}$ instead. Moreover, \ours's error bound includes the best-mixture error $f_{\Dtil, \cQ}(\Xhat)$. Taken together, we present the following bound:

%
\begin{theorem}\label{thm:acc}
%
For any private dataset $\Dtil \in \cX^n$, set of statistical queries $Q\subset\{q:\cX\to[0,1]\}$, public dataset $\Dhat \in \cX^m$ with support $\Xhat$, and privacy parameter $\tilde{\varepsilon}>0$, \ours with parameter $T = \Theta\left( \frac{n \tilde{\eps}\sqrt{\log m}}{\log |\cQ|} + \log(1/\beta) \right)$ outputs a distribution $A$ on $\Xhat$ such that, 
with probability $\ge 1-\beta$, 
\begin{align*}
    \max_{q\in \cQ} \abs{q(A) - q(\Dtil)} \leq O\pp{\sqrt{\frac{\log(|Q|) \cdot (\sqrt{\log m} +\log(\tfrac{1}{\beta})) 
   }{n\tilde{\varepsilon}}} + \errorLBfunc_{\Dtil, Q}\pp{\Dsupp}}.
\end{align*}
\end{theorem}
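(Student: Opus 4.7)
The plan is to mirror the analysis of \mwem~\cite{HardtLM12} with two crucial modifications. Since \ours maintains a distribution only on $\Xhat$, every appearance of $\log|\cX|$ in the \mwem accuracy bound will instead be $\log|\Xhat|\le\log m$. Moreover, because no distribution on $\Xhat$ can in general reproduce $\Dtil$'s query answers exactly, the comparator in the KL-potential cannot be $\Dtil$ itself but must be a best-mixture distribution $\mu^{*}\in\Delta(\Xhat)$ attaining $\max_{q\in Q}|q(\mu^{*})-q(\Dtil)|=f_{\Dtil,Q}(\Xhat)$. The mismatch between $\mu^{*}$ and $\Dtil$ is precisely what produces the additive $f_{\Dtil,Q}(\Xhat)$ term in the final bound.

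For the privacy guarantee I would apply CDP composition in the usual way. Each permute-and-flip (or exponential) selection with parameter $\eps_{0}$ is $\tfrac12\eps_{0}^{2}$-CDP, and each Gaussian measurement with noise scale $1/(n\eps_{0})$ is also $\tfrac12\eps_{0}^{2}$-CDP because linear queries have sensitivity $1/n$. Composing over $T$ rounds yields $T\eps_{0}^{2}$-CDP, so the choice $\eps_{0}=\tilde{\eps}/\sqrt{2T}$ gives $\tfrac12\tilde{\eps}^{2}$-CDP as required.

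For the utility analysis I would introduce the potential $\Psi_{t}=\mathrm{KL}(\mu^{*}\,\|\,A_{t})$. Since $A_{0}$ is the empirical distribution of $\Dhat$, we have $A_{0}(x)\ge 1/m$ for every $x\in\Xhat$, so by the cross-entropy bound $\Psi_{0}\le\log m$. A routine multiplicative-weights computation (using $\log(1+y)\le y$ and $e^{y}\le 1+y+y^{2}$ for $|y|\le 1/2$) then yields, up to absolute constants,
\[\Psi_{t-1}-\Psi_{t} \;\gtrsim\; \bigl(a_{t}-q_{t}(A_{t-1})\bigr)\bigl(q_{t}(\mu^{*})-q_{t}(A_{t-1})\bigr) - \bigl(a_{t}-q_{t}(A_{t-1})\bigr)^{2}.\]
Writing $a_{t}=q_{t}(\Dtil)+\eta_{t}$ with $\eta_{t}\sim\mathcal{N}\bigl(0,1/(n\eps_{0})^{2}\bigr)$ and $q_{t}(\mu^{*})=q_{t}(\Dtil)+\delta_{t}$ with $|\delta_{t}|\le f_{\Dtil,Q}(\Xhat)$, this lower-bounds a positive multiple of $(q_{t}(\Dtil)-q_{t}(A_{t-1}))^{2}$ minus cross-terms of order $\eta_{t}^{2}$ and $|\delta_{t}|$. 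Summing telescopes the potential (bounded above by $\log m$), chi-squared concentration controls $\sum_{t}\eta_{t}^{2}$, and the permute-and-flip accuracy guarantee ensures $|q_{t}(\Dtil)-q_{t}(A_{t-1})|$ is within $O(\log(T|Q|/\beta)/(n\eps_{0}))$ of $\max_{q\in Q}|q(\Dtil)-q(A_{t-1})|$. Averaging over $t$ and taking a final union bound then give
\[\max_{q\in Q}|q(A)-q(\Dtil)| \;\le\; O\!\left(\sqrt{\tfrac{\log m}{T}} + \tfrac{\sqrt{T}\log(|Q|/\beta)}{n\tilde{\eps}} + f_{\Dtil,Q}(\Xhat)\right),\]
and the stated choice of $T$ balances the first two terms to yield the claimed bound.

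The main obstacle will be carrying the comparator slack $\delta_{t}$ through the multiplicative-weights analysis cleanly. The standard \mwem algebra gives per-step progress in terms of the query's error against the comparator (here $\mu^{*}$), not against $\Dtil$, so I would need to show that the cumulative $\delta_{t}$-contributions do not grow with $T$ but instead collapse to a single additive $O(f_{\Dtil,Q}(\Xhat))$ term in the final error---either by invoking the triangle inequality once at the end on $|q(A)-q(\Dtil)| \le |q(A)-q(\mu^{*})|+f_{\Dtil,Q}(\Xhat)$, or by bounding $|\delta_{t}|$ uniformly and absorbing the sum into the regret. Everything else is a routine adaptation of \cite{HardtLM12} with $|\cX|$ replaced by $|\Xhat|\le m$.
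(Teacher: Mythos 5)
Your proposal is correct and follows essentially the same route as the paper's proof: both run the \mwem potential argument with the comparator replaced by the best-mixture distribution $D^*$ on $\widehat{\cX}$, bound the initial KL potential by $\log m$ via $A_0(x)\ge 1/m$, control the Gaussian measurement noise by chi-square concentration and the selection step by the exponential/permute-and-flip regret guarantee plus Azuma, and then balance $T$. On your one flagged obstacle, the paper takes the second of your two options---it bounds the comparator slack uniformly by $f_{\widetilde{D},\cQ}(\widehat{\cX})$ and absorbs it into the regret via a quadratic inequality in $\frac{1}{T}\sum_t |q_t(\widetilde{D})-q_t(A_{t-1})|$, handling the remaining zero-mean cross term $\sum_t (q_t(D^*)-q_t(\widetilde{D}))(a_t-q_t(\widetilde{D}))$ with a subgaussian tail bound---so no change of approach is needed.
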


\subsection{Privacy Analysis}
The privacy analysis follows from four facts: (i) Permute-and-flip satisfies $\varepsilon_0$-differential privacy \cite{McKennaS20}, which implies $\frac12\varepsilon_0^2$-concentrated differential privacy. (ii) The Gaussian noise addition also satisfies $\frac12\varepsilon_0^2$-concentrated differential privacy. (iii) The composition property of concentrated differential privacy allows us to add up these $2T$ terms \cite{BunS16}. (iv) Finally, we can convert the concentrated differential privacy guarantee into approximate differential privacy \cite{CanonneKS20}.

\begin{theorem}
When run with privacy parameter $\tilde{\varepsilon}>0$, \ours\ satisfies $\frac12\tilde{\varepsilon}^2$-concentrated differential privacy and, for all $\delta>0$, it satisfies$\left( \varepsilon(\delta), \delta\right)$-differential privacy, where 
\begin{align*}
    \varepsilon(\delta) &= \inf_{\alpha>1} \frac12\tilde{\varepsilon}^2\alpha + \frac{\log(1/\alpha\delta)}{\alpha-1} + \log(1-1/\alpha) 
    \le \frac12\tilde{\varepsilon}^2 + \sqrt{2\log(1/\delta)}\cdot\tilde{\varepsilon}.
\end{align*}
\end{theorem}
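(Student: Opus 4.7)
The plan is to verify the four facts listed in the paragraph immediately preceding the theorem and combine them. Since \ours\ consists of $T$ iterations, each invoking exactly one call to permute-and-flip and one call to the Gaussian mechanism, the overall privacy cost is determined by the per-call guarantee and a composition theorem; the claim essentially amounts to careful bookkeeping.

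First, I would analyze a single iteration. The score function fed into permute-and-flip is $s(\Dtil, q) = |q(A_{t-1}) - q(\Dtil)|$, which has $\ell_1$-sensitivity $1/n$ in $\Dtil$ since $q : \cX \to [0,1]$ and $q(\Dtil)$ changes by at most $1/n$ between neighboring datasets (note that $A_{t-1}$ depends on $\Dtil$ only through previous privatized outputs, so we can treat it as fixed in the sensitivity calculation via post-processing). Writing the sampling probability as $\propto \exp(\tfrac{\varepsilon_0}{2\Delta} s)$ with $\Delta = 1/n$ yields the standard $\varepsilon_0$-DP guarantee of permute-and-flip established by \citet{McKennaS20}. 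Pure $\varepsilon_0$-DP implies $\tfrac{1}{2}\varepsilon_0^2$-zCDP by the standard conversion in \cite{BunS16}. For the Gaussian step, the query $q_t$ has sensitivity $1/n$ and we add noise $\mathcal{N}(0, 1/(n\varepsilon_0)^2)$, so the zCDP guarantee is $\tfrac{\Delta^2}{2\sigma^2} = \tfrac{1}{2}\varepsilon_0^2$; clipping to $[0,1]$ is post-processing and is free.

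Next, I would apply the composition theorem of \cite{BunS16}: summing $2T$ terms of $\tfrac{1}{2}\varepsilon_0^2$ gives $T\varepsilon_0^2$-zCDP. Substituting $\varepsilon_0 = \tilde\varepsilon/\sqrt{2T}$ yields exactly $\tfrac{1}{2}\tilde\varepsilon^2$-zCDP, which is the first claim. For the $(\varepsilon(\delta), \delta)$-DP statement, I would invoke the optimal conversion from $\rho$-zCDP to approximate DP from \citet{CanonneKS20} with $\rho = \tfrac{1}{2}\tilde\varepsilon^2$; this yields exactly the infimum expression in the theorem. Finally, for the closed-form upper bound, I would either use the well-known relaxed conversion $\varepsilon(\delta) \le \rho + 2\sqrt{\rho \log(1/\delta)}$ (which plugs in to $\tfrac{1}{2}\tilde\varepsilon^2 + \tilde\varepsilon\sqrt{2\log(1/\delta)}$) or upper-bound the infimum by plugging in $\alpha = 1 + \sqrt{2\log(1/\delta)}/\tilde\varepsilon$ and dropping the nonpositive $\log(1-1/\alpha)$ term.

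There is no substantive obstacle: the result is a direct consequence of cited black-box facts. The only spot where care is needed is the sensitivity bookkeeping for permute-and-flip --- specifically recognizing that $A_{t-1}$ can be treated as data-independent when bounding the sensitivity of the score function, because it is a post-processing of prior privatized releases --- and matching the parameters in the coefficient $\varepsilon_0 n / 2$ to the standard form of the mechanism.
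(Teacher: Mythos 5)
Your proposal is correct and follows exactly the same route as the paper, which proves this theorem by the same four facts (permute-and-flip is $\varepsilon_0$-DP hence $\tfrac12\varepsilon_0^2$-zCDP, the Gaussian step is $\tfrac12\varepsilon_0^2$-zCDP, composition over $2T$ terms gives $T\varepsilon_0^2=\tfrac12\tilde\varepsilon^2$, and the conversion of \cite{CanonneKS20} yields the approximate-DP bound). Your additional bookkeeping on the score-function sensitivity and the adaptivity of $A_{t-1}$ is accurate and only makes the argument more explicit than the paper's.
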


\endgroup

\section{Empirical Evaluation}

In this section, we presents results comparing \ours against baseline algorithms\footnote{
\hdmm has been considered as a relevant baseline algorithm in past query release work, but having consulted \citet{McKennaMHM18}, we realized that running \hdmm in many settings (including ours) is infeasible. We refer readers to Appendix \ref{app:other_baselines}, where we provide a more detailed discussion.
}
in a variety of settings using the American Census Survey and ADULT datasets.

\subsection{Additional Baseline}\label{subsec:baselines}

\textbf{DualQuery.} Similar to \mwem, \dq \cite{gaboardi2014dual} frames query release as a two-player game, but it reverses the roles of the data and query players. \citet{gaboardi2014dual}
prove theoretical accuracy bounds for \dq that are worse than that of \mwem and show that on low-dimensional datasets where running \mwem is feasible, \mwem outperforms \dq. However, \dq employs optimization heuristics and is often more computationally efficient and scales to a wider range of query release problems than \mwem.

\subsection{Data}\label{subsec:data}

\textbf{American Community Survey (ACS).} We evaluate all algorithms on the 2018 American Community Survey (ACS), obtained from the IPUMS USA database \cite{ruggles2020ipums}. Collected every year by the US Census Bureau, the ACS provides statistics that capture the social and economic conditions of households across the country. Given that the Census Bureau may incorporate differential privacy into the ACS after 2025, the data provides a natural testbed for private query release algorithms in a real-world setting.

In total, we select $67$ attributes,\footnote{
An inventory of attributes can be found in Appendix \ref{app:data}.
} giving us a data domain with dimension $287$ and size $\approx 4.99 \times 10^{18}$. To run \mwem, we also construct a lower-dimensional version of the data. We refer to this data domain as ACS (reduced), which has dimension $33$ and a size of $98304$.

For our private dataset $\privD$, we use the 2018 ACS for the state of Pennsylvania (PA-18) and Georgia (GA-18). To select our public dataset $\pubD$, we explore the following:

\textit{Selecting across time.} We consider the setting in which there exists a public dataset describing our population at a different point in time. Using the 2020 US Census release as an example, one could consider using the 2010 US Census as a public dataset for some differentially private mechanism. In our experiments, we use the ACS data for Pennsylvania and Georgia from 2010.
    
\textit{Selecting across states.} We consider the setting in which there exists a public dataset collected concurrently from a different population. In the context of releasing state-level statistics, one can imagine for example that some states have differing privacy laws. In this case, we can identify data for a similar state that has been made public. In our experiments, we select a state with similar demographics to the private dataset's state---Ohio (OH-18) for Pennsylvania and North Carolina (NC-18) for Georgia. To explore how \ours performs using public data from potentially more dissimilar distributions, we also run \ours using the five largest states (by population) according to the 2010 US Census, i.e. California (CA-18), Texas (TX-18), New York (NY-18), Florida (FL-18), and Illinois (IL-18). 

\textbf{ADULT.} We evaluate algorithms on the ADULT dataset from the UCI machine learning dataset repository \cite{Dua:2019}. We construct private and public datasets by sampling with replacement rows from ADULT of size $0.9N$ and $0.1N$ respectively (where $N$ is the number of rows in ADULT). Thus, we frame samples from ADULT as individuals from some population in which there exists both a public and private dataset trying to characterize it (with the former being significantly smaller). In total, the dataset has $13$ attributes, and the data domain has dimension $146$ and support size $\approx 7.32 \times 10^{11}$.

\subsection{Empirical Optimizations}\label{subsec:optimizations}

Following a remark made by \citet{HardtLM12} for optimizing the empirical performance of \mwem, we apply the multiplicative weights update rule using sampled queries $q_i$ and measurements $a_i$ from previous iterations $i$. However, rather than use all past measurements, we choose queries with estimated error above some threshold. Specifically at each iteration $t$, we calculate the term $c_i = |q_i(A_t) - a_i|$ for $i \le t$. In random order, we apply multiplicative weights using all queries and measurements, indexed by $i$, where $c_i \ge \frac{c_t}{2}$, i.e. queries whose noisy error estimates are relatively high. In our implementation of \mwem and \ours, we use this optimization. We also substitute in the \textit{permute-and-flip} and \textit{Gaussian mechanisms} when running \mwem.

\subsection{Hyperparameter tuning}

On the ACS dataset, we select hyperparameters for \ours using $5$-run averages on the corresponding validation sets (treated as private) derived from the 2014 ACS release. Specifically, we evaluate Pennsylvania (PA-14) using PA-10 and OH-14, Georgia (GA-14) using GA-10 and NC-14, and both using CA-14, TX-14, NY-14, FL-14, and IL-14. In all other cases, we simply report the best performing five-run average across all hyperparameter choices. A list of hyperparameters is listed in Table \ref{tab:hyperparameters} in the appendix.

\begin{figure*}[!t]
    \centering
    \includegraphics[width=\linewidth]{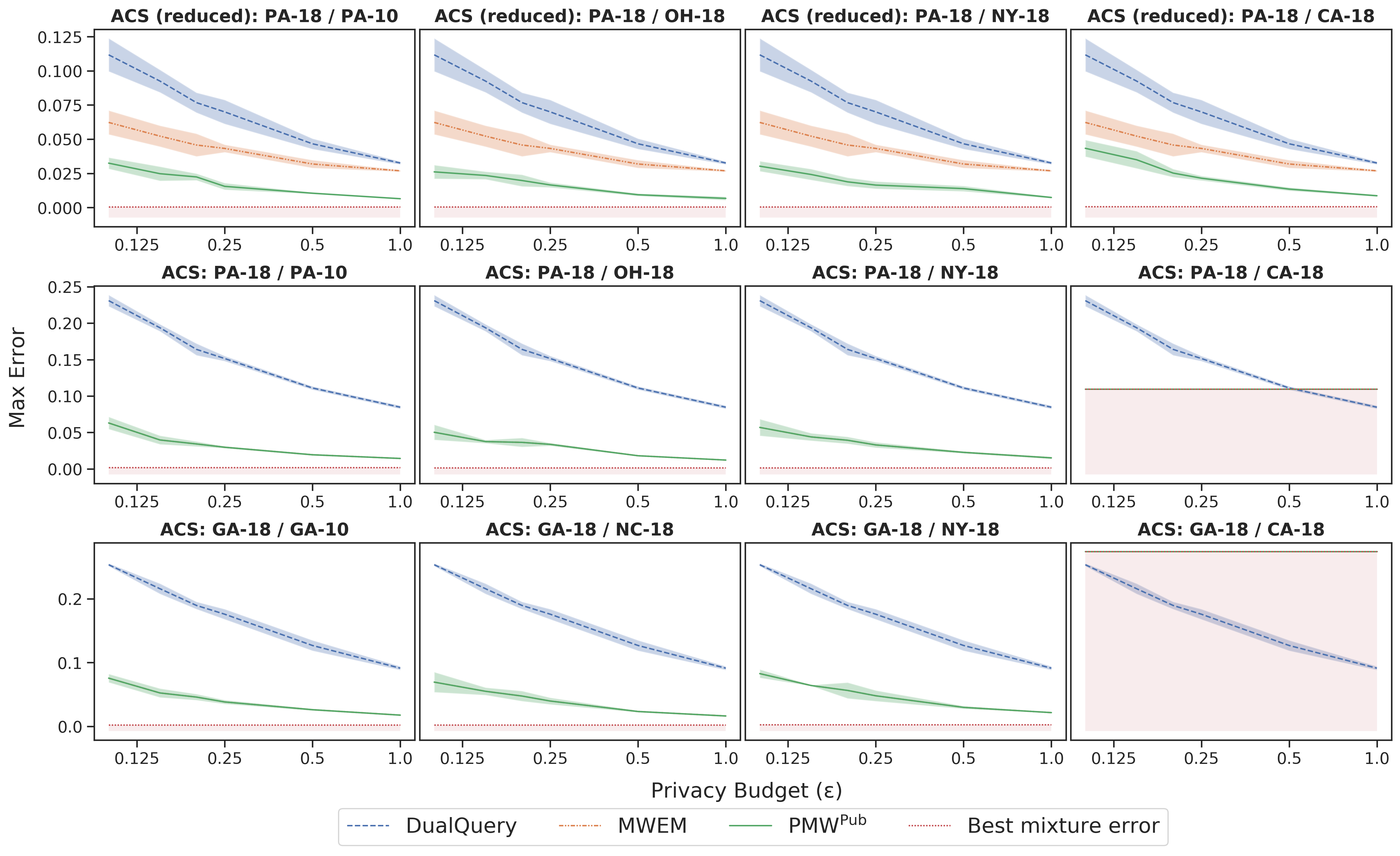}
    \caption{
    Max error for $\varepsilon \in \{ 0.1, 0.15, 0.2, 0.25, 0.5, 1 \}$ and $\delta = \frac{1}{n^2}$. Results are averaged over $5$ runs, and error bars represent one standard error. The \textit{x-axis} uses a logarithmic scale. Given the support of each public dataset, we shade the area below the \textit{best mixture error} to represent max error values that are unachievable by \ours. Additional results using our other choices of public datasets can found in Appendix \ref{app:results}.
    \textbf{Top row:} $5$-way marginals with a workload size of $3003$ (maximum) on the 2018 ACS (reduced) for Pennsylvania.
    \textbf{Middle row:} $3$-way marginals with a workload size of $4096$ on the 2018 ACS for Pennsylvania.
    \textbf{Bottom row:} $3$-way marginals with a workload size of $4096$ on the 2018 ACS for Georgia.
    }
    \label{fig:acs_compare_benchmarks}
\end{figure*}

\subsection{Results}\label{sec:results}

We first present results on the ACS data, demonstrating that \ours achieves state-of-the-art performance in a real world setting in which there exist public datasets that come from slightly different distributions. Next, we run experiments on ADULT and vary how similar the public and private distributions are by artificially changing the proportion of females to males in the public dataset. Finally, we run additional experiments to highlight various aspects of \ours in comparison to the baseline algorithms.

\subsubsection{ACS}\label{sec:results_acs}

In Figure \ref{fig:acs_compare_benchmarks}, we compare \ours against baseline algorithms while using different public datasets. In addition, we plot the best mixture error function for each public dataset to approximate a lower bound on the error of \ours, which we estimate by running (non-private) multiplicative weights with early stopping (at $100$ iterations).


We observe that on ACS (reduced) PA-18, \mwem achieves lower error than \dq at each privacy budget (Figure \ref{fig:acs_compare_benchmarks}), supporting the view that \mwem should perform well when it is feasible to run it. Using PA-10, OH-18, and NY-18 as public datasets, \ours improves upon the performance of \mwem and outperforms all baselines. Similarly, on the full-sized ACS datasets for Pennsylvania and Georgia, \ours outperforms \dq.

Next, we present results of \ours when using CA-18 to provide examples where the distribution over the public dataset's support cannot be reweighted to answer all queries accurately. In Figure \ref{fig:acs_compare_benchmarks}, we observe that when using CA-18, \ours performs well on ACS (reduced) PA-18. However, on the set of queries defined for ACS PA-18 and GA-18, the best mixture error for CA-18 is high. Moreover, we observe that across all privacy budgets $\varepsilon$, \ours achieves the best mixture error. Regardless of the number of rounds we run the algorithm for, the accuracy does not improve, and so the error plots in Figure \ref{fig:acs_compare_benchmarks} are flat and have no variance.

While it may be unsurprising that the support over a dataset describing California, a state with relatively unique demographics, is poor for answering large sets of queries on Pennsylvania and Georgia, one would still hope to identify this case ahead of time. One principled approach to verifying the quality of a public dataset is to \textit{spend some privacy budget on measuring its best mixture error}. Given that finding the best mixture error is a sensitivity-$\frac{1}{n}$ query, we can use the \textit{Laplace mechanism} to measure this value. For example, in the cases of both PA and GA (which have size $n\approx10^5$), we can measure the best mixture error with a tiny fraction of the privacy budget (such as $\varepsilon = 0.01$) by adding Laplace noise with standard deviation $\frac{\sqrt{2}}{n\varepsilon} \approx 1.414 \times 10^{-3}$. 

\subsubsection{ADULT}

To provide results on a different dataset, we also run experiments on ADULT in which we construct public and private datasets from the overall dataset. When sampled without bias, the public and private datasets come from the same distribution, and so the public dataset itself already approximates the distribution of the private dataset well. Consequently, we conduct additional experiments by sampling from ADULT according to the attribute \textit{sex} with some bias. Specifically, we sample females with probability $r + \Delta$ where $r\approx0.33$ is the proportion of females in the ADULT dataset. In Figure \ref{fig:adult_compare_to_benchmarks}, we observe that running \ours with a public dataset sampled without bias ($\Delta = 0$) achieves very low error across all privacy budgets, and when using a public dataset sampled with low bias ($|\Delta| \le 0.2$), \ours still outperforms \dq. However, when the public dataset is extremely biased ($\Delta \in \{ 0.45, 0.65 \}$), the performance of \ours deteriorates (though it still significantly outperforms \dq). Therefore, we again show under settings in which the public and private distributions are relatively similar, \ours achieves strong performance.


\begin{figure}[!t]
    \centering
    \includegraphics[width=\linewidth]{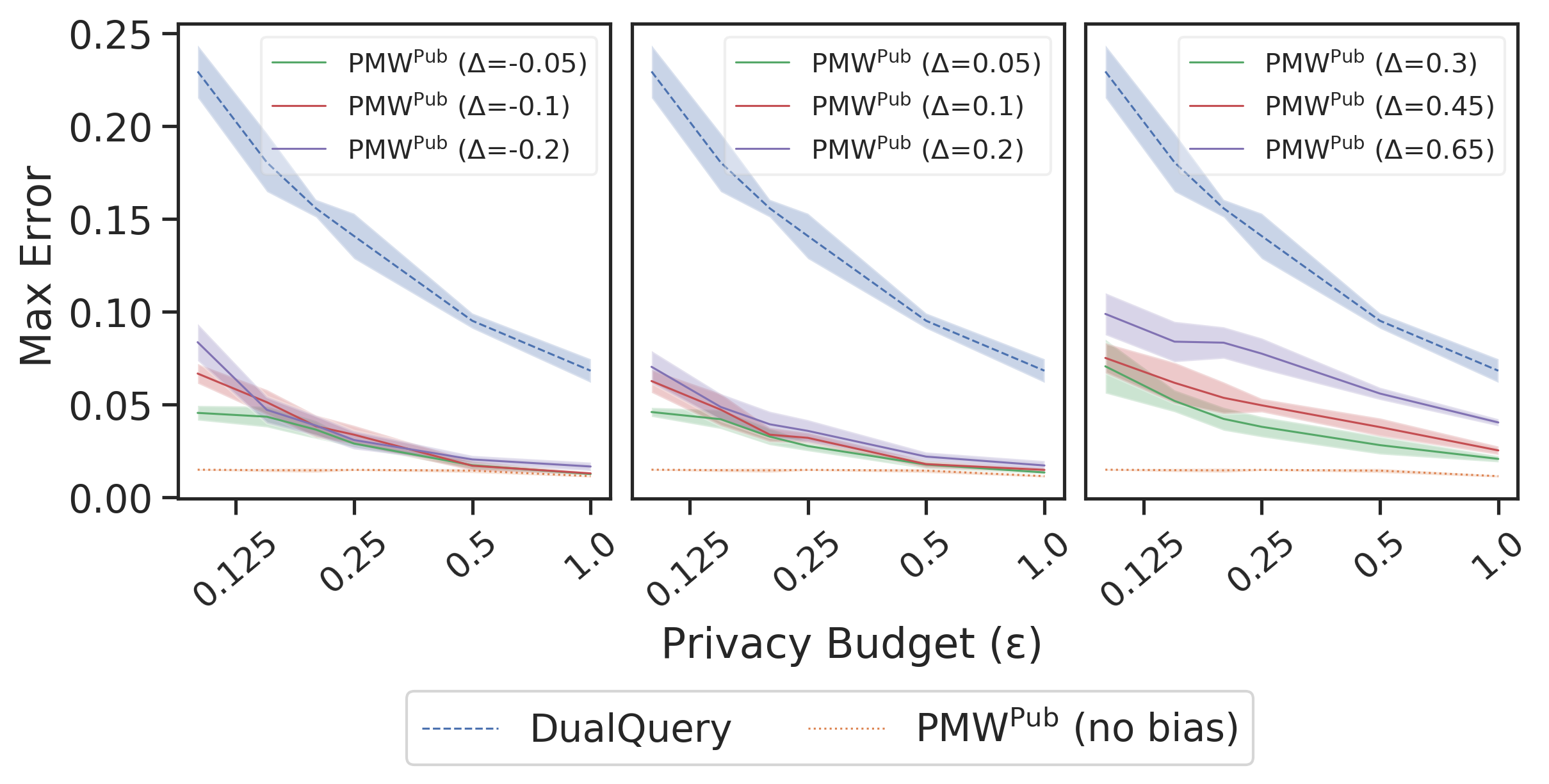}
    \caption{
    Max error on $3$-way marginals across privacy budgets $\varepsilon \in \{ 0.1, 0.15, 0.2, 0.25, 0.5, 1 \}$ where $\delta = \frac{1}{n^2}$ and the workload size is $256$. Results are averaged over $5$ runs, and error bars represent one standard error. Each public dataset is constructed by sampling from ADULT with some bias $\Delta$ over the attribute \textit{sex} (labeled as \ours ($\Delta$)). }
    \label{fig:adult_compare_to_benchmarks}
\end{figure}


\subsubsection{Additional empirical analysis}

\textit{Public data size requirements.} In Figure \ref{fig:acs_compare_pub_sizes}, we plot the performance on ACS PA-18 of \ours against baseline solutions while varying the fraction of the public dataset used. Specifically, we sample some percentage ($p \in \{100\%, 10\%, 1\%, 0.1\% \}$) of rows from PA-10 and OH-18 to use as the public dataset. \ours outperforms across all privacy budgets, even when only using $1\%$ of the public dataset (Figure \ref{fig:acs_compare_pub_sizes}). From a practical standpoint, these results suggest that one can collect a public dataset that is relatively small (compared to the private dataset) and still achieve good performance using \ours.

\begin{figure}[!t]
    \centering
    \includegraphics[scale=0.6]{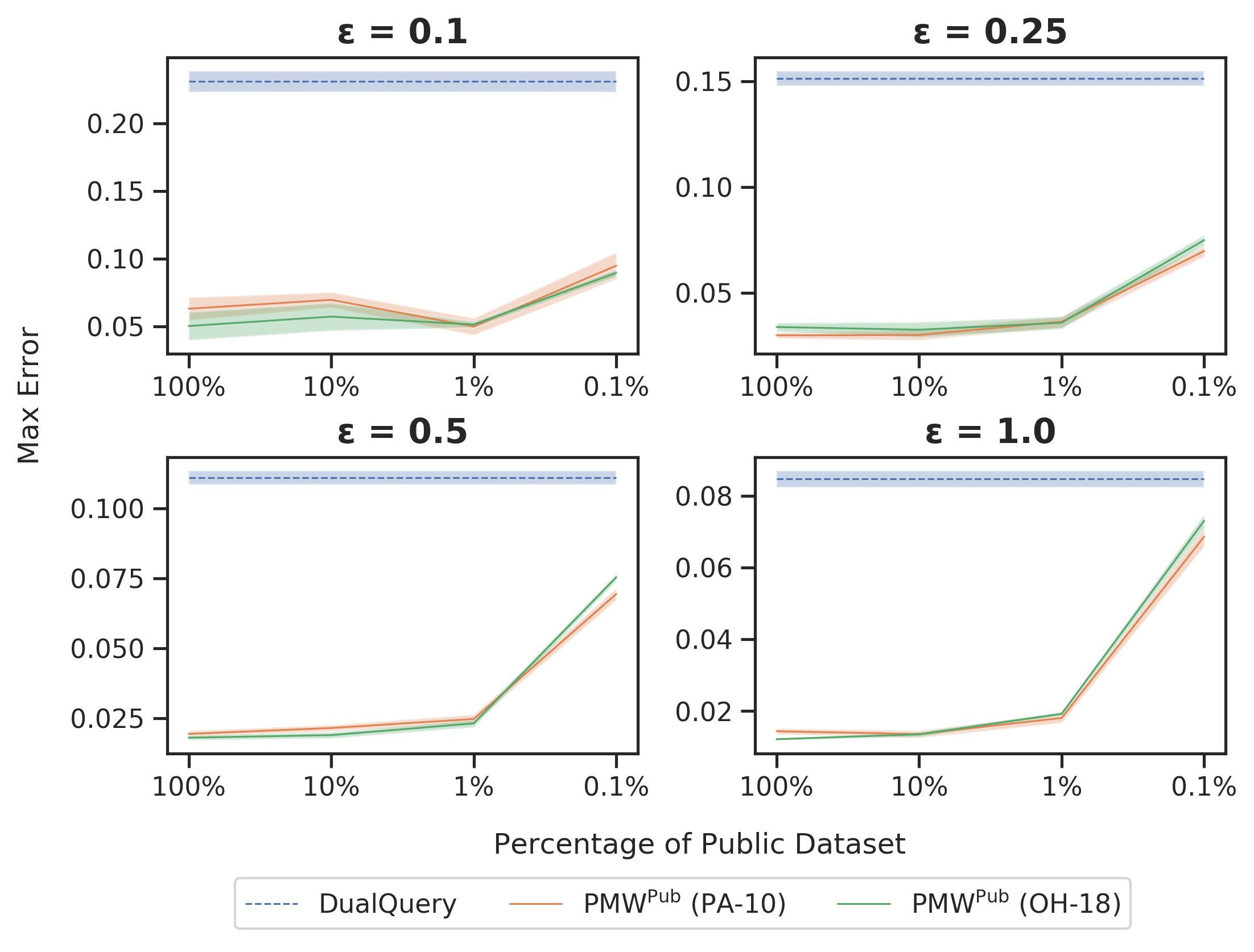}
    \caption{Performance comparison on ACS PA-18 while varying the size of the public dataset. We evaluate on $3$-way marginals with a workload size of $4096$ and privacy budgets defined by $\varepsilon \in \{ 0.1, 0.0.25, 0.5, 1 \}$ and $\delta = \frac{1}{n^2}$.
    }
    \label{fig:acs_compare_pub_sizes}
\end{figure}

\textit{Run-time.} Although running \mwem on the ACS (reduced)-PA dataset is feasible, \ours is computationally more efficient. An empirical evaluation can be found in Table \ref{tab:acs_small_runtime}.

\begin{table}[!t]
\centering
\caption{Run-time comparison between \ours and \mwem on the 2018 ACS PA and ACS (reduced) PA, denoted as \textsc{Full} and \text{Red.} respectively. We compare the per-iteration run-time (in seconds) between \ours (using PA-10 as the public dataset) and \mwem. Experiments are conducted using a single core on an i5-4690K CPU (3.50GHz) machine.}
\setlength\tabcolsep{9pt}
\begin{tabular}{l l c c}
    \toprule
    & \textsc{Algo}. & \textsc{Per-iter. run-time} \\
    \midrule
    \multirow{2}{*}{\textsc{Red.}}
    & \ours & $0.185$ \\
    & \mwem & $0.919$ \\
    \midrule
    \multirow{2}{*}{\textsc{Full}}
    & \ours & $2.021$ \\
    & \mwem & $-$ \\
    \bottomrule
\end{tabular}
\label{tab:acs_small_runtime}
\end{table}


\section{Conclusion and Discussion}
In this paper, we study differentially private query release in which the privacy algorithm has access to both public and private data samples.  We present an algorithm \ours, a variant of \mwem, that can take advantage of a source of public data.  We demonstrate that \ours improves accuracy over \mwem and other baselines both theoretically and in an empirical study involving the American Community Survey and ADULT datasets.  We also demonstrate that our algorithm is scalable to high-dimensional data. 

Our work also suggests several interesting future directions. Our algorithm \ours essentially leverages public datasets to identify a small support (a subset of the data domain) as a compressed data domain that can be used to run \mwem efficiently. Can we develop algorithms that identify small supports but do not require access to public data? Recent work  \cite{MWD} has taken a similar approach to use differentially private generative adversarial nets (GAN) to generate a support for \mwem. However, their method was not explicitly designed for query release. Another interesting direction is to design hybrid algorithms that alternate between "support generation" steps like private GAN and "sample re-weighting" steps like \mwem. This type of method can potentially construct a better support over time that achieves lower best-mixture-error than the support given by the public datasets.

\section*{Acknowledgments} 
JU is supported by NSF grants CCF-1750640, CNS-1816028, CNS-1916020. ZSW is supported by NSF grant SCC-1952085 and a Google Faculty Research Award.

\printbibliography

@article{alon2019limits,
  title={Limits of private learning with access to public data},
  author={Alon, Noga and Bassily, Raef and Moran, Shay},
  journal={arXiv preprint arXiv:1910.11519},
  year={2019}
}

@inproceedings{dwork2018privacy,
  title={Privacy-preserving prediction},
  author={Dwork, Cynthia and Feldman, Vitaly},
  booktitle={Conference On Learning Theory},
  pages={1693--1702},
  year={2018},
  organization={PMLR}
}

@inproceedings{nandi2020privately,
  title={Privately answering classification queries in the agnostic pac model},
  author={Nandi, Anupama and Bassily, Raef},
  booktitle={Algorithmic Learning Theory},
  pages={687--703},
  year={2020},
  organization={PMLR}
}

@article{bassily2018model,
  title={Model-agnostic private learning},
  author={Bassily, Raef and Thakurta, Abhradeep Guha and Thakkar, Om Dipakbhai},
  journal={Advances in Neural Information Processing Systems},
  year={2018}
}

@inproceedings{Abowd18,
  author    = {John M. Abowd},
  title     = {The {U.S.} Census Bureau Adopts Differential Privacy},
  booktitle = {ACM International Conference on
               Knowledge Discovery {\&} Data Mining},
  pages     = {2867},
  year      = {2018},
  timestamp = {Wed, 21 Nov 2018 12:44:27 +0100},
  biburl    = {https://dblp.org/rec/bib/conf/kdd/Abowd18},
  bibsource = {dblp computer science bibliography, https://dblp.org}
}

@inproceedings{gaboardi2014dual,
  title={Dual query: Practical private query release for high dimensional data},
  author={Gaboardi, Marco and Arias, Emilio Jes{\'u}s Gallego and Hsu, Justin and Roth, Aaron and Wu, Zhiwei Steven},
  booktitle={International Conference on Machine Learning},
  pages={1170--1178},
  year={2014}
}

@misc{Dua:2019 ,
author = "Dua, Dheeru and Graff, Casey",
year = "2017",
title = "{UCI} Machine Learning Repository",
url = "http://archive.ics.uci.edu/ml",
institution = "University of California, Irvine, School of Information and Computer Sciences" }

@inproceedings{mcsherry2007mechanism,
  title={Mechanism design via differential privacy},
  author={McSherry, Frank and Talwar, Kunal},
  booktitle={48th Annual IEEE Symposium on Foundations of Computer Science (FOCS'07)},
  pages={94--103},
  year={2007},
  organization={IEEE}
}

@inproceedings{HR10,
  title={A multiplicative weights mechanism for privacy-preserving data analysis},
  author={Hardt, Moritz and Rothblum, Guy N},
  booktitle={2010 IEEE 51st Annual Symposium on Foundations of Computer Science},
  pages={61--70},
  year={2010},
  organization={IEEE}
}

@inproceedings{HardtLM12,
  title={A simple and practical algorithm for differentially private data release},
  author={Hardt, Moritz and Ligett, Katrina and McSherry, Frank},
  booktitle={Advances in Neural Information Processing Systems},
  pages={2339--2347},
  year={2012}
}

@article{BunUV18,
  author    = {Mark Bun and
               Jonathan Ullman and
               Salil P. Vadhan},
  title     = {Fingerprinting Codes and the Price of Approximate Differential Privacy},
  journal   = {{SIAM} J. Comput.},
  volume    = {47},
  number    = {5},
  pages     = {1888--1938},
  year      = {2018},
  url       = {https://doi.org/10.1137/15M1033587},
  doi       = {10.1137/15M1033587},
  timestamp = {Fri, 05 Apr 2019 13:05:16 +0200},
  biburl    = {https://dblp.org/rec/bib/journals/siamcomp/BunUV18},
  bibsource = {dblp computer science bibliography, https://dblp.org}
}

@inproceedings{UllmanV11,
  title={PCPs and the hardness of generating private synthetic data},
  author={Ullman, Jonathan and Vadhan, Salil},
  booktitle={Theory of Cryptography Conference},
  pages={400--416},
  year={2011},
  organization={Springer}
}

@inproceedings{DworkMNS06,
  author        = {Dwork, Cynthia and McSherry, Frank and Nissim, Kobbi and Smith, Adam},
  title         = {Calibrating Noise to Sensitivity in Private Data Analysis},
  booktitle     = {Proceedings of the 3rd Conference on Theory of Cryptography},
  series        = {TCC '06},
  year          = {2006},
  pages         = {265--284},
  publisher     = {Springer},
  address       = {Berlin, Heidelberg}
}

@inproceedings{BunS16,
  author        = {Bun, Mark and Steinke, Thomas},
  title         = {Concentrated Differential Privacy: Simplifications, Extensions, and Lower Bounds},
  booktitle     = {Proceedings of the 14th Conference on Theory of Cryptography},
  series        = {TCC '16-B},
  year          = {2016},
  pages         = {635--658},
  publisher     = {Springer},
  address       = {Berlin, Heidelberg}
}

@misc{McKennaS20,
    title={Permute-and-Flip: A new mechanism for differentially private selection},
    author={Ryan McKenna and Daniel Sheldon},
    year={2020},
    eprint={2010.12603},
    archivePrefix={arXiv},
    primaryClass={cs.CR}
}

@article{McKennaMHM18,
  author    = {Ryan McKenna and
               Gerome Miklau and
               Michael Hay and
               Ashwin Machanavajjhala},
  title     = {Optimizing error of high-dimensional statistical queries under differential
               privacy},
  journal   = {{PVLDB}},
  volume    = {11},
  number    = {10},
  pages     = {1206--1219},
  year      = {2018},
}

@misc{Kifer19,
author = {Daniel Kifer},
title = {Consistency with External Knowledge: The TopDown
Algorithm},
year = {2019},
note = {\url{http://www.cse.psu.edu/~duk17/papers/topdown.pdf}},
}

@inproceedings{NRVW,
  author    = {Seth Neel and
               Aaron Roth and
               Giuseppe Vietri and
               Zhiwei Steven Wu},
  title     = {Oracle Efficient Private Non-Convex Optimization},
  booktitle = {Proceedings of the 37th International Conference on
                  Machine Learning, {ICML} 2020},
                  year ={2020}
 }

@article{ruggles2020ipums,
  title={IPUMS USA: Version 10.0, DOI: 10.18128/D010},
  author={Ruggles, S and others},
  journal={V10. 0},
  year={2020}
}

@article{bassily2020private,
  title={Private Query Release Assisted by Public Data},
  author={Bassily, Raef and Cheu, Albert and Moran, Shay and Nikolov, Aleksandar and Ullman, Jonathan and Wu, Zhiwei Steven},
  journal={arXiv preprint arXiv:2004.10941},
  year={2020}
}

@incollection{beimel2013private,
  title={Private learning and sanitization: Pure vs. approximate differential privacy},
  author={Beimel, Amos and Nissim, Kobbi and Stemmer, Uri},
  booktitle={Approximation, Randomization, and Combinatorial Optimization. Algorithms and Techniques},
  pages={363--378},
  year={2013},
  publisher={Springer}
}

@article{bassily2020learning,
  title={Learning from mixtures of private and public populations},
  author={Bassily, Raef and Moran, Shay and Nandi, Anupama},
  journal={arXiv preprint arXiv:2008.00331},
  year={2020}
}

@article{vietri2020new,
  title={New Oracle-Efficient Algorithms for Private Synthetic Data Release},
  author={Vietri, Giuseppe and Tian, Grace and Bun, Mark and Steinke, Thomas and Wu, Zhiwei Steven},
  journal={arXiv preprint arXiv:2007.05453},
  year={2020}
}

@inproceedings{CanonneKS20,
      title={The Discrete Gaussian for Differential Privacy}, 
      author={Clément L. Canonne and Gautam Kamath and Thomas Steinke},
      year={2020},
      url={https://arxiv.org/abs/2004.00010},
      booktitle={NeurIPS},
}

@techreport{AbowdASKLMS19,
  title={Census topdown: Differentially private data, incremental schemas, and consistency with public knowledge},
  author={Abowd, John and Ashmead, Robert and Simson, Garfinkel and Kifer, Daniel and Leclerc, Philip and Machanavajjhala, Ashwin and Sexton, William},
  year={2019},
  institution={Technical Report. US Census Bureau}
}

@misc{PapernotSMRTE18,
      title={Scalable Private Learning with PATE}, 
      author={Nicolas Papernot and Shuang Song and Ilya Mironov and Ananth Raghunathan and Kunal Talwar and Úlfar Erlingsson},
      year={2018},
      eprint={1802.08908},
      archivePrefix={arXiv},
      primaryClass={stat.ML}
}

@inproceedings{DworkKMMN06,
  title={Our data, ourselves: Privacy via distributed noise generation},
  author={Dwork, Cynthia and Kenthapadi, Krishnaram and McSherry, Frank and Mironov, Ilya and Naor, Moni},
  booktitle={Annual International Conference on the Theory and Applications of Cryptographic Techniques},
  pages={486--503},
  year={2006},
  organization={Springer}
}

@misc{DworkR16,
      title={Concentrated Differential Privacy}, 
      author={Cynthia Dwork and Guy N. Rothblum},
      year={2016},
      eprint={1603.01887},
      archivePrefix={arXiv},
      primaryClass={cs.DS}
}

@inproceedings{BassilyNSSSU16,
  title={Algorithmic stability for adaptive data analysis},
  author={Bassily, Raef and Nissim, Kobbi and Smith, Adam and Steinke, Thomas and Stemmer, Uri and Ullman, Jonathan},
  booktitle={Proceedings of the forty-eighth annual ACM symposium on Theory of Computing},
  pages={1046--1059},
  year={2016}
}

@inproceedings{MWD,
  author    = {Marcel Neunhoeffer and
               Zhiwei Steven Wu and
               Cynthia Dwork},
  title     = {Private Post-GAN Boosting},
  year      = {2021},
 booktitle = {9th International Conference on Learning Representations, {ICLR} 2021}
}

@article{ji2013differential,
  title={Differential privacy based on importance weighting},
  author={Ji, Zhanglong and Elkan, Charles},
  journal={Machine learning},
  volume={93},
  number={1},
  pages={163--183},
  year={2013},
  publisher={Springer}
}

\onecolumn
\appendix
\section{Appendix}

\subsection{Proofs}\label{app:proofs}

\begin{proposition}
Let $\tilde{D} \in \cX^n$ and $\Dhat \in \cX^m$. Let $\cQ$ be a finite set of statistical queries $q : \cX \to [0,1]$. Let $\tilde\varepsilon>0$ and $T \in \mathbb{N}$. Let $A$ be the output of Algorithm \ref{alg:framework} with parameters $\tilde\varepsilon$ and $T$, query class $\cQ$, and inputs $\tilde{D}$ as the private dataset and $\Dhat$ as the public dataset.
Then $A$ is a distribution on $\Xhat = \supp(\Dhat) \subset \cX$. For all $\beta \in (0,1)$, if $T \ge 7 \log(3/\beta)$, then \[\pr{\begin{array}{r}\max_{q \in \cQ} |q(\Dtil)-q(A)| \le 2\errorLBfunc_{\Dtil,\cQ}(\Dsupp) + \sqrt{\frac{4\log m}{T} + \frac{4T}{\tilde\varepsilon^2 n^2} +  \frac{4\sqrt{\log(3/\beta)}}{\tilde\varepsilon n }} \\+ \frac{2\sqrt{2T}}{\tilde\varepsilon n} \log |\cQ| + \sqrt{\frac{1}{2T} \log \left( \frac{3}{\beta} \right)}\end{array}} \ge 1-\beta.\]
\end{proposition}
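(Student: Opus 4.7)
The plan is to adapt the potential-function analysis of multiplicative weights used for \mwem~\cite{HardtLM12}, but track the KL divergence to the optimal \emph{reweighting} of the public support rather than to $\Dtil$ itself. Concretely, set $\alpha_0 \defeq \errorLBfunc_{\Dtil,\cQ}(\Xhat)$ and fix $D^* \in \argmin_{D \in \Delta(\Xhat)} \max_{q\in\cQ} \abs{q(D) - q(\Dtil)}$, so that $\abs{q(D^*) - q(\Dtil)} \le \alpha_0$ for every $q \in \cQ$. With potential $\Psi(A) \defeq \KL(D^* \| A)$, we have $\Psi(A_T) \ge 0$ and $\Psi(A_0) \le \log m$, because $A_0$ puts mass at least $1/m$ on each point of $\Xhat$.

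For the per-round drop, applying $\log Z \le Z - 1$ together with $\exp(w) \le 1 + w + w^2$ for $\abs{w} \le 1$ to the normalizer of the multiplicative-weights update gives $\Psi(A_{t-1}) - \Psi(A_t) \ge \tfrac12 (q_t(D^*) - q_t(A_{t-1}))(a_t - q_t(A_{t-1})) - \tfrac14 (a_t - q_t(A_{t-1}))^2$. Splitting $q_t(D^*) - q_t(A_{t-1}) = (q_t(D^*) - q_t(\Dtil)) + (q_t(\Dtil) - q_t(A_{t-1}))$, using $\abs{q_t(D^*) - q_t(\Dtil)} \le \alpha_0$, and applying the triangle inequality $\abs{a_t - q_t(A_{t-1})} \le \abs{a_t - q_t(\Dtil)} + \abs{q_t(\Dtil) - q_t(A_{t-1})}$ to the resulting cross term yields
\[ \Psi(A_{t-1}) - \Psi(A_t) \;\ge\; \tfrac14 (q_t(\Dtil) - q_t(A_{t-1}))^2 - \tfrac14 (a_t - q_t(\Dtil))^2 - \tfrac{\alpha_0}{2}\bigl(\abs{a_t - q_t(\Dtil)} + \abs{q_t(\Dtil) - q_t(A_{t-1})}\bigr). \]
Summing over $t \in [T]$, setting $V \defeq \tfrac{1}{T}\sum_t \abs{q_t(\Dtil) - q_t(A_{t-1})}$, and using Cauchy--Schwarz to lower-bound the second-moment sum by $T V^2$, one arrives at the quadratic inequality $V^2 - 2\alpha_0 V \le \tfrac{4\log m}{T} + \tfrac1T\sum_t (a_t - q_t(\Dtil))^2 + \tfrac{2\alpha_0}{T}\sum_t \abs{a_t - q_t(\Dtil)}$, which solves to $V \le 2\alpha_0 + \sqrt{\tfrac{4\log m}{T} + (\text{noise})}$. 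Since $a_t - q_t(\Dtil) \sim \mathcal{N}(0, 1/(n\varepsilon_0)^2)$ independently across rounds, a chi-square tail bound on the sum of squares and a Gaussian absolute-moment bound on $\sum_t \abs{a_t - q_t(\Dtil)}$ control the noise terms at confidence $\beta/3$ each; substituting $\varepsilon_0 = \tilde\varepsilon/\sqrt{2T}$ and invoking the hypothesis $T \ge 7 \log(3/\beta)$ delivers the $\tfrac{4T}{\tilde\varepsilon^2 n^2}$ and $\tfrac{4\sqrt{\log(3/\beta)}}{\tilde\varepsilon n}$ contributions inside the square root.

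Passing from the sampled-query error $V$ to the worst-case error over $\cQ$, I would invoke the permute-and-flip / exponential-mechanism regret guarantee \cite{BassilyNSSSU16}: $\ex{q_t \mid A_{t-1}}{\abs{q_t(\Dtil) - q_t(A_{t-1})}} \ge \max_{q\in\cQ}\abs{q(\Dtil) - q(A_{t-1})} - \tfrac{2\log\abs{\cQ}}{\varepsilon_0 n}$. Since the per-round summands lie in $[0,1]$, Azuma's inequality upgrades this to a high-probability lower bound on $V$, producing the $\tfrac{2\sqrt{2T}}{\tilde\varepsilon n}\log\abs{\cQ}$ regret and $\sqrt{\tfrac{1}{2T}\log(3/\beta)}$ tail contributions. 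Combining with $\max_q \abs{q(\Dtil) - q(A)} \le \tfrac{1}{T}\sum_t \max_q\abs{q(\Dtil) - q(A_{t-1})}$ (convexity of $\max$, since $A$ is the average of the $A_{t-1}$) and a union bound over the three high-probability events closes the argument. The main obstacle is the constant-bookkeeping in the per-round analysis: obtaining \emph{exactly} $2\alpha_0$ outside the square root (rather than a $\sqrt{\alpha_0}$ term, which a crude $\abs{a_t - q_t(A_{t-1})} \le 1$ bound would produce) forces both the triangle-inequality split above and the quadratic solve, and the three concentration budgets must be tuned so that the total failure probability closes at $\beta$.
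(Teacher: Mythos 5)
Your overall architecture matches the paper's proof almost step for step: the potential $\Psi(A)=\KL(D^*\|A)$ with $\Psi(A_0)\le\log m$, the per-round drop via $\log Z\le Z-1$ and $\exp(w)\le 1+w+w^2$, the reduction to a quadratic inequality in $V$ solved as $V\le 2\alpha_0+\sqrt{\cdot}$, the chi-square tail for $\sum_t(a_t-q_t(\Dtil))^2$, the permute-and-flip regret lemma plus Azuma, convexity of the max for the averaged output, and a three-way union bound. The one place you diverge is the cross term, and that divergence is fatal for the \emph{stated} bound.

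Concretely: after splitting off $\tfrac12(q_t(D^*)-q_t(\Dtil))(a_t-q_t(A_{t-1}))$, you apply the triangle inequality and end up needing to control $\tfrac{2\alpha_0}{T}\sum_t\abs{a_t-q_t(\Dtil)}$. But $\abs{a_t-q_t(\Dtil)}$ is the absolute value of a $\mathcal{N}(0,1/(\varepsilon_0 n)^2)$ variable, so this sum has \emph{mean} $\sqrt{2/\pi}\cdot T/(\varepsilon_0 n)$; after dividing by $T$ and substituting $\varepsilon_0=\tilde\varepsilon/\sqrt{2T}$ you get a contribution of order $\alpha_0\sqrt{T}/(\tilde\varepsilon n)$ inside the square root --- no concentration argument can beat the mean. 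This is not the claimed $4\sqrt{\log(3/\beta)}/(\tilde\varepsilon n)$, and since the theorem is applied with $T=\Theta(\tilde\varepsilon n\sqrt{\log m}/\log\abs{\cQ}+\log(1/\beta))$, the $\sqrt{T}$ factor is genuinely larger. The paper avoids this by splitting $a_t-q_t(A_{t-1})=(q_t(\Dtil)-q_t(A_{t-1}))+(a_t-q_t(\Dtil))$ and taking absolute values only on the first piece (which is absorbed into the quadratic solve, exactly as you do); the second piece, $\sum_t(q_t(D^*)-q_t(\Dtil))(a_t-q_t(\Dtil))$, is kept \emph{signed}. Conditioned on the selected queries it is a zero-mean Gaussian (the noise is independent of $q_t(D^*)-q_t(\Dtil)$), hence subgaussian with variance proxy at most $T\alpha_0^2/(\varepsilon_0 n)^2$, so it deviates from zero only by $O\bigl(\alpha_0\sqrt{T\log(1/\beta)}/(\varepsilon_0 n)\bigr)$ --- which, normalized by $T$, is exactly the $4\sqrt{\log(3/\beta)}\,\alpha_0/(\tilde\varepsilon n)$ term in the proposition. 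You need to replace your absolute-moment step with this signed, mean-zero concentration argument to recover the stated bound.
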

If we set $T=\Theta\left(\frac{\tilde\varepsilon n \sqrt{\log m} }{\log|\cQ|}  + \log(1/\beta)\right)$, then the bound above becomes \[\pr{\max_{q \in \cQ} |q(\Dtil)-q(A)| \le O \left( {\errorLBfunc_{\Dtil,\cQ}(\Dsupp) } + \sqrt{ \frac{\log|\cQ|}{\tilde\varepsilon n} \cdot \left(\sqrt{\log m} + \log(1/\beta)\right) }\right)} \ge 1-\beta,\]
thus proving Theorem \ref{thm:acc}.
\begin{proof}
We follow the analysis of \citet{HardtLM12}. Let $A_t, q_t, a_t$ be as in Algorithm \ref{alg:framework}.
Let $\alpha_0 = \errorLBfunc_{\Dtil,\cQ}(\Dsupp)$ be the error of the optimal reweighting of the public data. 
Let \[D^* = \argmin_{D \in \Delta(\Xhat)} \max_{q \in \cQ} |q(D) - q(\Dtil)|\] be the optimal reweighting so that $\max_{q \in \cQ} |q(D^*)-q(\Dtil)| = \alpha_0$.
We define a potential function $\Psi : \Delta(\Xhat) \to \mathbb{R}$ by \[\Psi(A) = \mathrm{D}_1(D^*\|A) = \sum_{x \in \Xhat} D^*(x) \log\left(\frac{D^*(x)}{A(x)}\right).\]
Since $\Psi$ is a KL divergence, it follows that, for all $A \in \Delta(\Xhat)$, \[ 0 \le \Psi(A) \le \log\left(\frac{1}{\min_{x \in \Xhat} A(x) }\right).\]
In particular, $\Psi(A_T) \ge 0$ and $\Psi(A_0) \le \log m$, since any $x \in \Xhat$ must be one of the $m$ elements of $\Dhat$ and hence has $A_0(x) \ge 1/m$.

Fix an arbitrary $t \in [T]$. For all $x \in \Xhat$, we have $A_t(x) = \frac{A_{t-1}(x)\exp(q_t(x)(a_t-q_t(A_{t-1}))/2)}{\sum_{y \in \Xhat} A_{t-1}(y)\exp(q_t(y)(a_t-q_t(A_{t-1}))/2)}$. Thus
\begin{align*}
    &\Psi(A_{t-1}) - \Psi(A_t) \\
    &= \sum_{x \in \Xhat} D^*(x) \log\left( \frac{A_t(x)}{A_{t-1}(x)} \right)\\
    &= \sum_{x \in \Xhat} D^*(x) \log\left( \frac{\exp(q_t(x)(a_t-q_t(A_{t-1}))/2)}{ \sum_{y \in \Xhat} A_{t-1}(y)\exp(q_t(y)(a_t-q_t(A_{t-1}))/2)} \right)\\
    &= \sum_{x \in \Xhat} D^*(x) q_t(x) \frac{a_t-q_t(A_{t-1})}{2} -  \log\left( \sum_{y \in \Xhat} A_{t-1}(y)\exp\left(q_t(y)\frac{a_t-q_t(A_{t-1})}{2}\right) \right)\\
    &\ge q_t(D^*) \frac{a_t-q_t(A_{t-1})}{2} + 1 -  \sum_{y \in \Xhat} A_{t-1}(y)\exp\left(q_t(y)\frac{a_t-q_t(A_{t-1})}{2}\right) \tag{$\forall x>0 ~~\log x \le x-1$}\\
    &\ge q_t(D^*) \frac{a_t-q_t(A_{t-1})}{2} + 1 -  \sum_{y \in \Xhat} A_{t-1}(y)\left(1+q_t(y)\frac{a_t-q_t(A_{t-1})}{2}+q_t(y)^2\frac{(a_t-q_t(A_{t-1}))^2}{4}\right) \tag{$\forall x \le 1 ~~\exp(x) \le 1+x+x^2$}\\
    &= q_t(D^*) \frac{a_t-q_t(A_{t-1})}{2} + 1 -  1-q_t(A_{t-1})\frac{a_t-q_t(A_{t-1})}{2}-\ex{X \gets A_{t-1}}{q_t(X)^2}\frac{(a_t-q_t(A_{t-1}))^2}{4}\\
    &= (q_t(D^*)-q_t(A_{t-1})) \frac{a_t-q_t(A_{t-1})}{2} -\ex{X \gets A_{t-1}}{q_t(X)^2}\frac{(a_t-q_t(A_{t-1}))^2}{4}\\
    &\ge (q_t(D^*)-q_t(A_{t-1})) \frac{a_t-q_t(A_{t-1})}{2} -\frac{(a_t-q_t(A_{t-1}))^2}{4}\\
    &= \frac14 (2q_t(D^*)-a_t-q_t(A_{t-1}))(a_t-q_t(A_{t-1}))\\
    &= \frac14 (q_t(\Dtil)-q_t(A_{t-1}))^2 + \frac12 (q_t(D^*)-q_t(\Dtil))(a_t-q_t(A_{t-1})) - \frac14 (a_t-q_t(\Dtil))^2\\
    &= \frac14 (q_t(\Dtil)-q_t(A_{t-1}))^2 + \frac12 (q_t(D^*)-q_t(\Dtil))(q_t(\Dtil)-q_t(A_{t-1})) \\&~~~~~~~~~~+ \frac12 (q_t(D^*)-q_t(\Dtil))(a_t-q_t(\Dtil)) - \frac14 (a_t-q_t(\Dtil))^2\\
    &\ge \frac14 (q_t(\Dtil)-q_t(A_{t-1}))^2 - \frac12\alpha_0|q_t(\Dtil)-q_t(A_{t-1})| \\&~~~~~~~~~~+ \frac12(q_t(D^*)-q_t(\Dtil))(a_t-q_t(\Dtil)) - \frac14 (a_t-q_t(\Dtil))^2, \\
\end{align*}
where the final inequality follows from the fact that $|q_t(D^*)-q_t(\Dtil)| \le \alpha_0$ by the definition of $D^*$.

Putting together what we have so far gives
\begin{align*}
    \frac{2}{T} \log m & \ge \frac2T \left(\Psi(A_0) - \Psi(A_T) \right) \\
    &= \frac2T \sum_{t \in [T]} \Psi(A_{t-1}) - \Psi(A_t) \\
    &\ge  \frac2T \sum_{t \in [T]} \frac14(q_t(\Dtil)-q_t(A_{t-1}))^2-\frac2T \sum_{t \in [T]} \frac12\alpha_0 |q_t(\Dtil)-q_t(A_{t-1})| \\&~~~~~~~~~~~~~~~+ \frac2T \sum_{t \in [T]} \frac12(q_t(D^*)-q_t(\Dtil))(a_t-q_t(\Dtil)) -  \frac2T \sum_{t \in [T]} \frac14 (a_t-q_t(\Dtil))^2\\
    &\ge  \frac{1}{2} \left(\frac1T \sum_{t \in [T]} |q_t(\Dtil)-q_t(A_{t-1})|\right)^2-\frac{\alpha_0}{T} \sum_{t \in [T]} |q_t(\Dtil)-q_t(A_{t-1})| \\&~~~~~~~~~~~~~~~+ \frac{1}{T}\sum_{t \in [T]} (q_t(D^*)-q_t(\Dtil))(a_t-q_t(\Dtil)) -  \frac1{2T} \sum_{t \in [T]} (a_t-q_t(\Dtil))^2,
\end{align*}
where the final inequality uses the relationship between the 1-norm and 2-norm.

Now, for each $t \in [T]$ independently, $a_t - q_t(\Dtil)$ is distributed according to $\mathcal{N}(0,1/\varepsilon_0^2 n^2)$. 
Thus the sum $\sum_{t \in [T]} (a_t-q_t(\Dtil))^2$ follows a chi-square distribution with $T$ degrees of freedom and mean $\frac{T}{\varepsilon_0^2 n^2}$. This yields the tail bound \[\forall \kappa \ge 1 ~~~~~\pr{ \sum_{t \in [T]} (a_t-q_t(\Dtil))^2 \ge \kappa \cdot \frac{T}{\varepsilon_0^2 n^2}} \le \left( \kappa \cdot e^{1-\kappa} \right)^{T/2}.\]
In addition, the noise $a_t-q_t(\Dtil)$ is independent from $q_t(D^*)-q_t(\Dtil)$. Hence, the sum $\sum_{t \in [T]} (q_t(D^*)-q_t(\Dtil))(a_t-q_t(\Dtil))$ follows a $\sigma^2$-subgaussian distribution with $\sigma^2 = \frac{1}{\varepsilon_0^2 n^2} \sum_{t \in [T]} (q_t(D^*)-q_t(\Dtil))^2 \le \frac{T \alpha_0^2}{\varepsilon_0^2 n^2}$. In particular, \[\forall \lambda \ge 0 ~~~~~ \pr{\sum_{t \in [T]} (q_t(D^*)-q_t(\Dtil))(a_t-q_t(\Dtil)) \ge \lambda\frac{\alpha_0 \sqrt{T}}{\varepsilon_0 n}} \le e^{-\lambda^2/2}.\]

Set $V:=\frac1T \sum_{t \in [T]} |q_t(\Dtil)-q_t(A_{t-1})|$.

Thus, for all $\kappa,\lambda\ge0$, 
\begin{align*}
    &\pr{\frac12 V^2 - \alpha_0 V \le \frac{2\log m}{T} + \frac{\kappa }{2\varepsilon_0^2 n^2} +  \frac{\lambda \alpha_0}{\varepsilon_0 n \sqrt{T}}} \ge 1-\left( \kappa \cdot e^{1-\kappa} \right)^{T/2}-e^{-\lambda^2/2}.    
\end{align*}

The above expression contains the quadratic inequality 
$\frac12 V^2 - \alpha_0 V \le \frac{2\log m}{T} + \frac{\kappa }{2\varepsilon_0^2 n^2} +  \frac{\lambda \alpha_0}{\varepsilon_0 n \sqrt{T}}.$ This equation implies $$V \le \alpha_0 + \sqrt{\alpha_0^2 + \frac{4\log m}{T} + \frac{\kappa }{\varepsilon_0^2 n^2} +  \frac{2\lambda \alpha_0}{\varepsilon_0 n \sqrt{T}}} \le 2\alpha_0 + \sqrt{\frac{4\log m}{T} + \frac{\kappa }{\varepsilon_0^2 n^2} +  \frac{2\lambda \alpha_0}{\varepsilon_0 n \sqrt{T}}}.$$

Now we invoke the properties of the permute-and-flip or exponential mechanism that selects $q_t$. For each $t \in [T]$, we have \cite[Lemma 7.1]{BassilyNSSSU16} \[\ex{q_t}{|q_t(\Dtil)-q_t(A_{t-1})|} \ge \max_{q \in \cQ} |q(\Dtil)-q(A_{t-1})| - \frac{2}{\varepsilon_0 n} \log |\cQ|.\]
Since $0 \le |q_t(\Dtil)-q_t(A_{t-1})| \le 1$, we can apply Azuma's inequality to obtain \[\pr{\frac1T \sum_{t \in [T]} |q_t(\Dtil)-q_t(A_{t-1})| \ge \frac1T \sum_{t \in [T]} \max_{q \in \cQ} |q(\Dtil)-q(A_{t-1})| - \frac{2}{\varepsilon_0 n} \log |\cQ| - \nu } \ge 1 - e^{-2\nu^2T}\] for all $\nu \ge 0$.
Finally, for $A = \frac1T \sum_{t \in [T]} A_{t-1}$, we have 
\begin{align*}
    \max_{q \in \cQ} |q(\Dtil)-q(A)| &\le \frac1T \sum_{t \in [T]} \max_{q \in \cQ} |q(\Dtil) - q(A_{t-1})|\\
    &\le \frac1T \sum_{t \in [T]} |q_t(\Dtil)-q_t(A_{t-1})| + \frac{2}{\varepsilon_0 n} \log |\cQ| + \nu \tag{with probability $\ge 1- e^{-2\nu^2T})$}\\ 
    &\le 2\alpha_0 + \sqrt{\frac{4\log m}{T} + \frac{\kappa }{\varepsilon_0^2 n^2} +  \frac{2\lambda \alpha_0}{\varepsilon_0 n \sqrt{T}}} + \frac{2}{\varepsilon_0 n} \log |\cQ| + \nu \tag{with probability $\ge 1-\left( \kappa \cdot e^{1-\kappa} \right)^{T/2}-e^{-\lambda^2/2}$}.
\end{align*}
Now we set $\nu = \sqrt{\frac{1}{2T} \log \left( \frac{3}{\beta} \right)}$, $\kappa=2$, and $\lambda=\sqrt{2\log(3/\beta)}$ and apply a union bound. If $T \ge 7 \log(3/\beta)$, then \[\pr{\begin{array}{r} \max_{q \in \cQ} |q(\Dtil)-q(A)| \le 2\alpha_0 + \sqrt{\frac{4\log m}{T} + \frac{2}{\varepsilon_0^2 n^2} +  \frac{2\sqrt{2\log(3/\beta)} \alpha_0}{\varepsilon_0 n \sqrt{T}}} \\ + \frac{2}{\varepsilon_0 n} \log |\cQ| + \sqrt{\frac{1}{2T} \log \left( \frac{3}{\beta} \right)}\end{array}} \ge 1-\beta.\]
Substituting in $\alpha_0 = \errorLBfunc_{\Dtil,\cQ}(\Dsupp) \le 1$ and $\varepsilon_0 = \frac{\tilde\varepsilon}{\sqrt{2T}}$ yields the result.
\end{proof}

We remark that the proof above uses the bound $\Psi(A_0) = \mathrm{D}_1\left(D^*\middle\|\Dhat\right) \le \log m$.
This is tight in the worst case, but is likely to be loose in practice, as the private and public datasets are likely to be relatively similar.
We could also alter Algorithm \ref{alg:framework} to initialize $A_0$ to be uniform on $\Xhat$, in which case we can replace $\log m$ with $\log|\Xhat|$ in the final bound.

\begin{lemma}
For any  support $S\in 2^\cX$ and set of linear queries $Q$, 
the  best mixture error function $f_{D,Q}$ is $\tfrac{1}{n}$ sensitive. 
That is for any pair of neighboring datasets $D,D'$ of size $n$, $\abs{f_{D,Q}(S)  - f_{D',Q}(S)} \leq \tfrac{1}{n}$.
\end{lemma}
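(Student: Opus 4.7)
The plan is to exploit the fact that the inner optimum $\mu^\ast$ achieving $f_{D,Q}(S)$ is a feasible (though not necessarily optimal) choice of distribution for $f_{D',Q}(S)$, and then to use that a single linear query $q : \cX \to [0,1]$ has sensitivity $1/n$ on datasets of size $n$.

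First I would record the basic sensitivity fact about a single linear query. For any predicate $\phi$ with $\phi(x) \in [0,1]$ and any pair of neighboring datasets $D,D'$ of size $n$ that differ in a single record, we have
\[
\abs{q_\phi(D) - q_\phi(D')} \;\le\; \frac{1}{n},
\]
since the two averages differ in exactly one summand and each $\phi(x) \in [0,1]$.

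Next I would do the one-sided comparison. Let
\[
\mu^\ast \;\in\; \argmin_{\mu \in \Delta(S)} \max_{q \in Q}\, \abs[\Big]{q(D) - \sum_{x \in S} \mu_x\, q(x)}
\]
so that $f_{D,Q}(S) = \max_{q \in Q} | q(D) - \sum_{x \in S} \mu^\ast_x q(x) |$. Because $\mu^\ast \in \Delta(S)$ it is also a feasible choice in the minimization defining $f_{D',Q}(S)$, hence
\[
f_{D',Q}(S) \;\le\; \max_{q \in Q} \abs[\Big]{q(D') - \sum_{x \in S} \mu^\ast_x q(x)}.
\]
Applying the triangle inequality inside the max, together with the sensitivity bound above, yields
\[
f_{D',Q}(S) \;\le\; \max_{q \in Q} \abs[\Big]{q(D) - \sum_{x \in S} \mu^\ast_x q(x)} + \max_{q \in Q} \abs{q(D') - q(D)} \;\le\; f_{D,Q}(S) + \frac{1}{n}.
\]

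Finally I would run the symmetric argument, starting from an optimal $\mu^{\ast\ast}$ for $D'$, to get $f_{D,Q}(S) \le f_{D',Q}(S) + 1/n$. Combining the two inequalities gives $|f_{D,Q}(S) - f_{D',Q}(S)| \le 1/n$, which is the claim. There is no real obstacle here: the only subtle point is noticing that since $\mu^\ast$ depends on $D$ but lives in $\Delta(S)$ independently of $D$, it remains a valid competitor in the optimization defining $f_{D',Q}$, so the min-max structure does not interfere with transferring the per-query $1/n$ sensitivity to the overall objective.
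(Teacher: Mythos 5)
Your proof is correct and is essentially the paper's argument: the paper packages the same reasoning as general lemmas ("the max of $\tfrac{1}{n}$-sensitive functions is $\tfrac{1}{n}$-sensitive, and by symmetry so is the min"), whereas you instantiate those steps directly by fixing the optimal $\mu^\ast$ for $D$, noting it remains feasible for $D'$, and transferring the per-query $\tfrac{1}{n}$ bound through the triangle inequality. No gap; the two write-ups differ only in presentation.
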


\begin{proof}
First, we show that the maximum of $s$-sensitive functions is an $s$-sensitive function and by symmetry the minimum of $s$-sensitive functions is $s$-sensitive. For any $s\leq 1$, let $G = \{ g:\cX\rightarrow [0,1] \}$ be a class of $s$-sensitive functions and define a function $f:\cX \rightarrow [0,1]$ as 
$f(X) = \max_{g\in G} g(X)$, for $X\in\cX$.

Fix any support $S\in 2^\cX$ and neighboring dataset $D,D'$ with size $n$.
Also fix the set $Q$ and note each query $q\in Q$ is bounded in [0,1] and it's $\tfrac{1}{n}$-sensitive.
Let $g' = \text{arg}\max_{g\in G} g(D')$ and $g = \text{arg}\max_{g\in G} g(D) $, then for  neighboring $D, D'$ we have 
\begin{align*}
    f(D) - f(D')  &\leq f(D) - g(D') && \text{Since } f(D') \geq g(D') \\ 
    &\leq f(D) - g(D) + s && \text{Since } |g(D) - g(D')| \leq s \\ 
    &=  s && \text{Since } f(D) = g(D) 
\end{align*}
Similarly, we can show that $ f(D') - f(D)\leq s$, therefore $f$ is $s$-sensitive.

Since a marginal query $q\in \cQ$, is $\tfrac{1}{n}$-sensitive, after fixing any $\mu$ the expression \[\max_{q\in Q}\left|q(D) -  \sum_{x \in S} \mu_x  q(x) \right|\] is a max of $\tfrac{1}{n}$ sensitive functions, then by the argument above it is a $\tfrac{1}{n}$-sensitive function. It follows that $f_{D,Q}(S)$ is a minimum of $\tfrac{1}{n}$-sensitive functions therefore $f_{D,Q}(S)$ is $\tfrac{1}{n}$-sensitive.

\end{proof}

\subsection{Data}\label{app:data}

Attributes for our experiments on ACS, ACS (reduced), and ADULT:
\begin{itemize}
    \itemsep0em 
    \item \textbf{ACS:} ACREHOUS, AGE, AVAILBLE, CITIZEN, CLASSWKR, DIFFCARE, DIFFEYE, DIFFHEAR, DIFFMOB, DIFFPHYS, DIFFREM, DIFFSENS, DIVINYR, EDUC, EMPSTAT, FERTYR, FOODSTMP, GRADEATT, HCOVANY, HCOVPRIV, HINSCAID, HINSCARE, HINSVA, HISPAN, LABFORCE, LOOKING, MARRINYR, MARRNO, MARST, METRO, MIGRATE1, MIGTYPE1, MORTGAGE, MULTGEN, NCHILD, NCHLT5, NCOUPLES, NFATHERS, NMOTHERS, NSIBS, OWNERSHP, RACAMIND, RACASIAN, RACBLK, RACE, RACOTHER, RACPACIS, RACWHT, RELATE, SCHLTYPE, SCHOOL, SEX, SPEAKENG, VACANCY, VEHICLES, VET01LTR, VET47X50, VET55X64, VET75X90, VET90X01, VETDISAB, VETKOREA, VETSTAT, VETVIETN, VETWWII, WIDINYR, WORKEDYR
    \item \textbf{ACS (reduced):} DIFFEYE, DIFFHEAR, EMPSTAT, FOODSTMP, HCOVPRIV, HINSCAID, HINSCARE, OWNERSHP, RACAMIND, RACASIAN, RACBLK, RACOTHER, RACPACIS, RACWHT, SEX
    \item \textbf{ADULT:} sex, income\textgreater50K, race, relationship, marital-status, workclass, occupation, education-num, native-country, capital-gain, capital-loss, hours-per-week, age
\end{itemize}

\noindent In addition, we discretize the following continuous attributes (with the number of bins after preprocessing) into categorical attributes:
\begin{itemize}
    \itemsep0em 
    \item \textbf{ACS:} AGE ($10$)
    \item \textbf{ACS (reduced):} AGE ($10$)
    \item \textbf{ADULT:} capital-gain ($16$), capital-loss ($6$), hours-per-week ($10$), age ($10$)
\end{itemize}

\subsection{Hyperparameters}\label{appendix:hyperparameters}

We report hyperparameters used across all experiments in Table \ref{tab:hyperparameters}. 

\begin{table}[!h]
\centering
\caption{Hyperparameter selection for experiments on all datasets.}
\label{tab:hyperparameters}
\begin{tabular}{l l c}
    \toprule
    Method & Parameter & Values \\
    \midrule
    \multirow{3}{*}{\ours} & \multirow{3}{*}{$T$} 
    & 300, 250, 200, 150, \\
    & & 125, 100, 75, 50, \\
    & & 25, 10, 5 \\
    \midrule
    \multirow{3}{*}{\mwem} & \multirow{3}{*}{$T$} 
    & 300, 250, 200, 150, \\
    & & 125, 100, 75, 50, \\
    & & 25, 10, 5 \\
    \midrule
    \multirow{2}{*}{\dq}
    & samples & 500 250 100 50 \\
    & $\eta$ & 5 4 3 2 \\
    \bottomrule
\end{tabular}
\end{table}

\subsection{Experimental results}\label{app:results}

\begin{figure*}[!t]
    \centering
    \includegraphics[width=\linewidth]{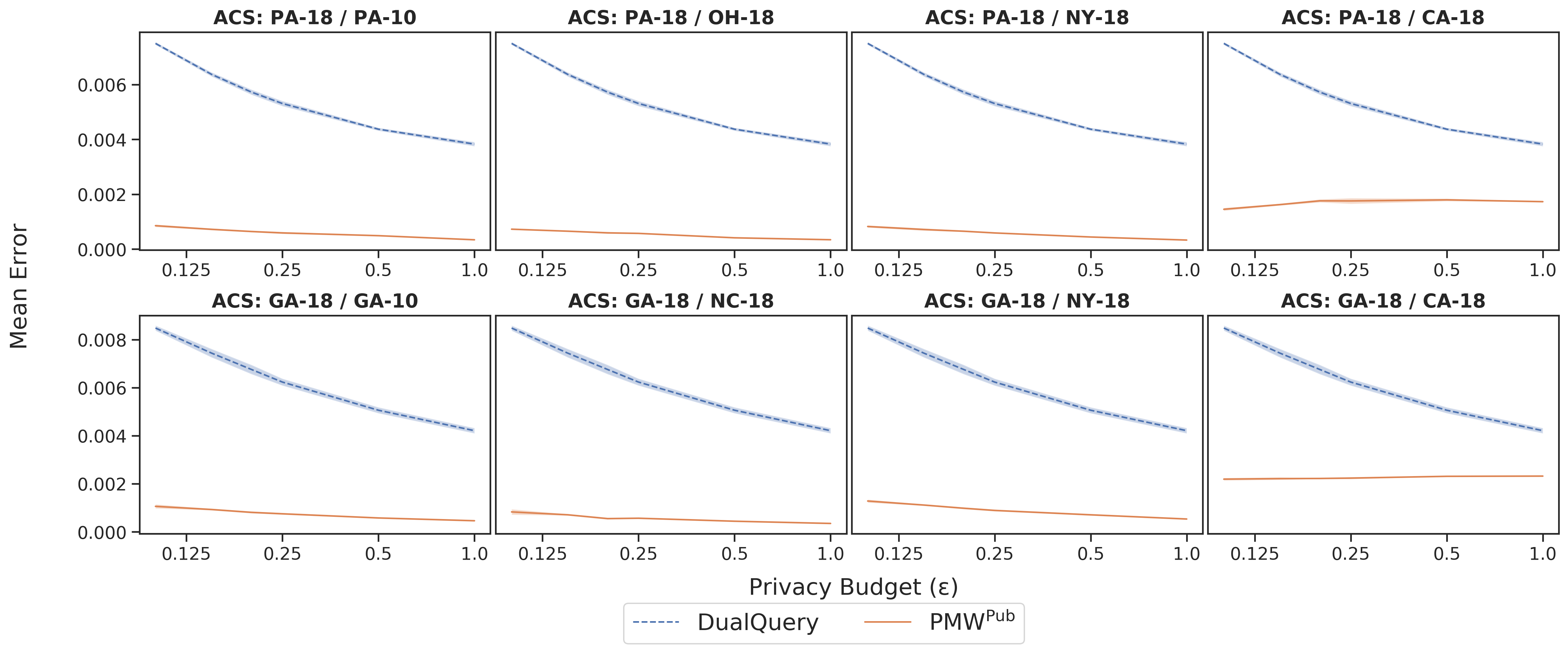}
    \caption{
    Mean error for $\varepsilon \in \{ 0.1, 0.15, 0.2, 0.25, 0.5, 1 \}$ and $\delta = \frac{1}{n^2}$, evaluated on $3$-way marginals with a workload size of $4096$. Results are averaged over $5$ runs, and error bars represent one standard error. The \textit{x-axis} uses a logarithmic scale. 
    \textbf{Top row:} 2018 ACS for Pennsylvania.
    \textbf{Bottom row:} 2018 ACS for Georgia.
    }
    \label{fig:acs_compare_benchmarks_mean}
\end{figure*}

\begin{figure*}[!t]
    \centering
    \includegraphics[width=\linewidth]{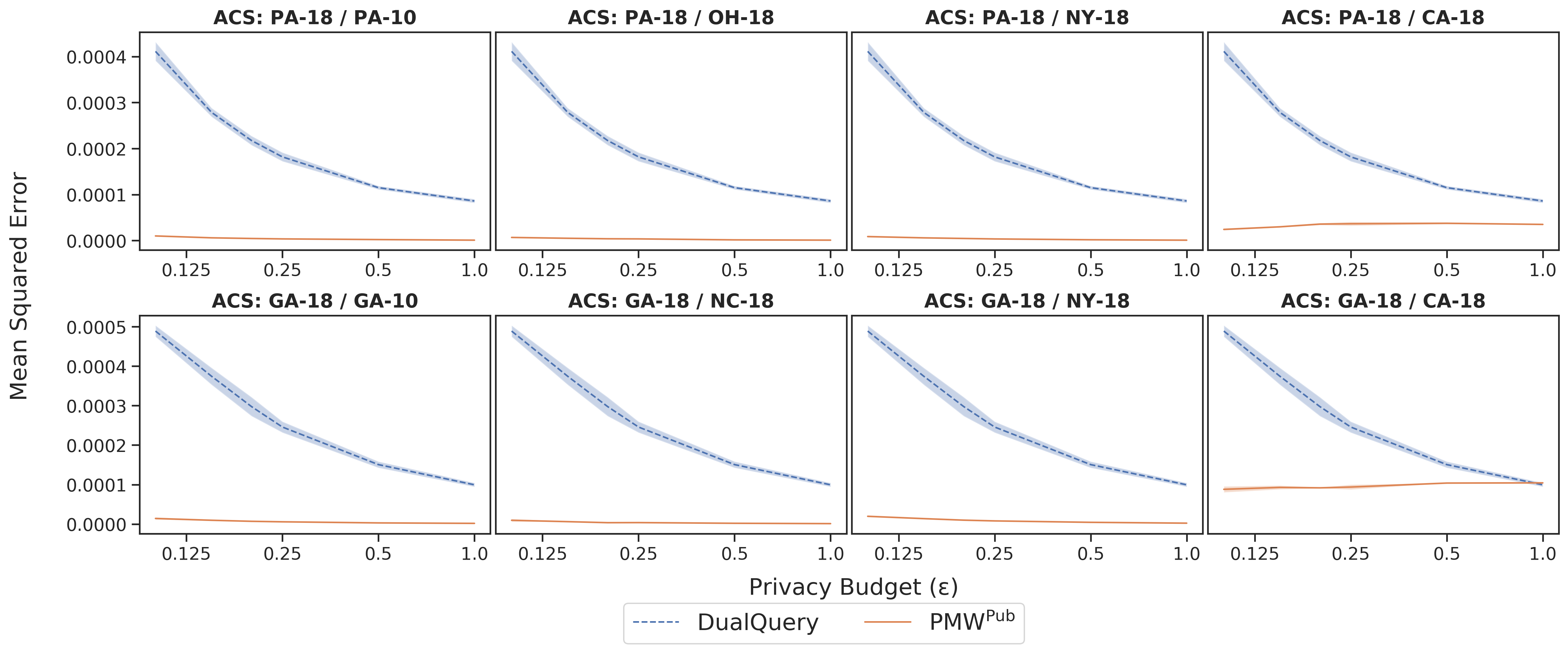}
    \caption{
    Mean squared error for $\varepsilon \in \{ 0.1, 0.15, 0.2, 0.25, 0.5, 1 \}$ and $\delta = \frac{1}{n^2}$, evaluated on $3$-way marginals with a workload size of $4096$. Results are averaged over $5$ runs, and error bars represent one standard error. The \textit{x-axis} uses a logarithmic scale. 
    \textbf{Top row:} 2018 ACS for Pennsylvania.
    \textbf{Bottom row:} 2018 ACS for Georgia.
    }
    \label{fig:acs_compare_benchmarks_mse}
\end{figure*}

\begin{figure*}[!t]
    \centering
    \includegraphics[width=\linewidth]{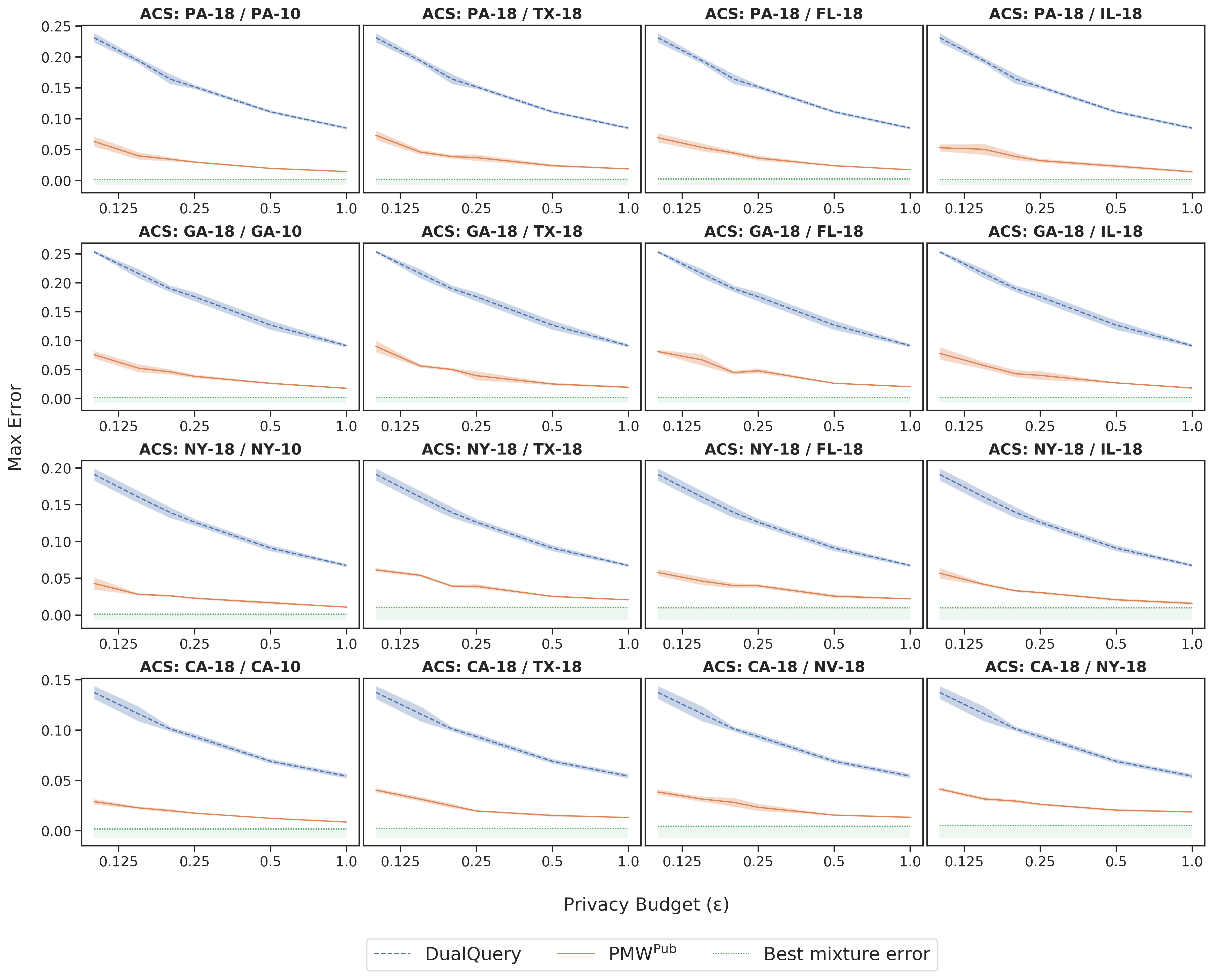}
    \caption{
    Additional plots of the max error (3-way marginals and workload size of 4096) for $\varepsilon \in \{ 0.1, 0.15, 0.2, 0.25, 0.5, 1 \}$ and $\delta = \frac{1}{n^2}$ on PA-18 (Row 1), GA-18 (Row 2), NY-18 (Row 3), and CA-18 (Row 4). Results are averaged over $5$ runs, and error bars represent one standard error. The \textit{x-axis} uses a logarithmic scale. Given the support of each public dataset, we shade the area below the \textit{best mixture error} to represent max error values that are unachievable by \ours.
    }
    \label{fig:acs_compare_benchmarks_additional}
\end{figure*}

\begin{figure}[!t]
    \centering
    \includegraphics[scale=0.6]{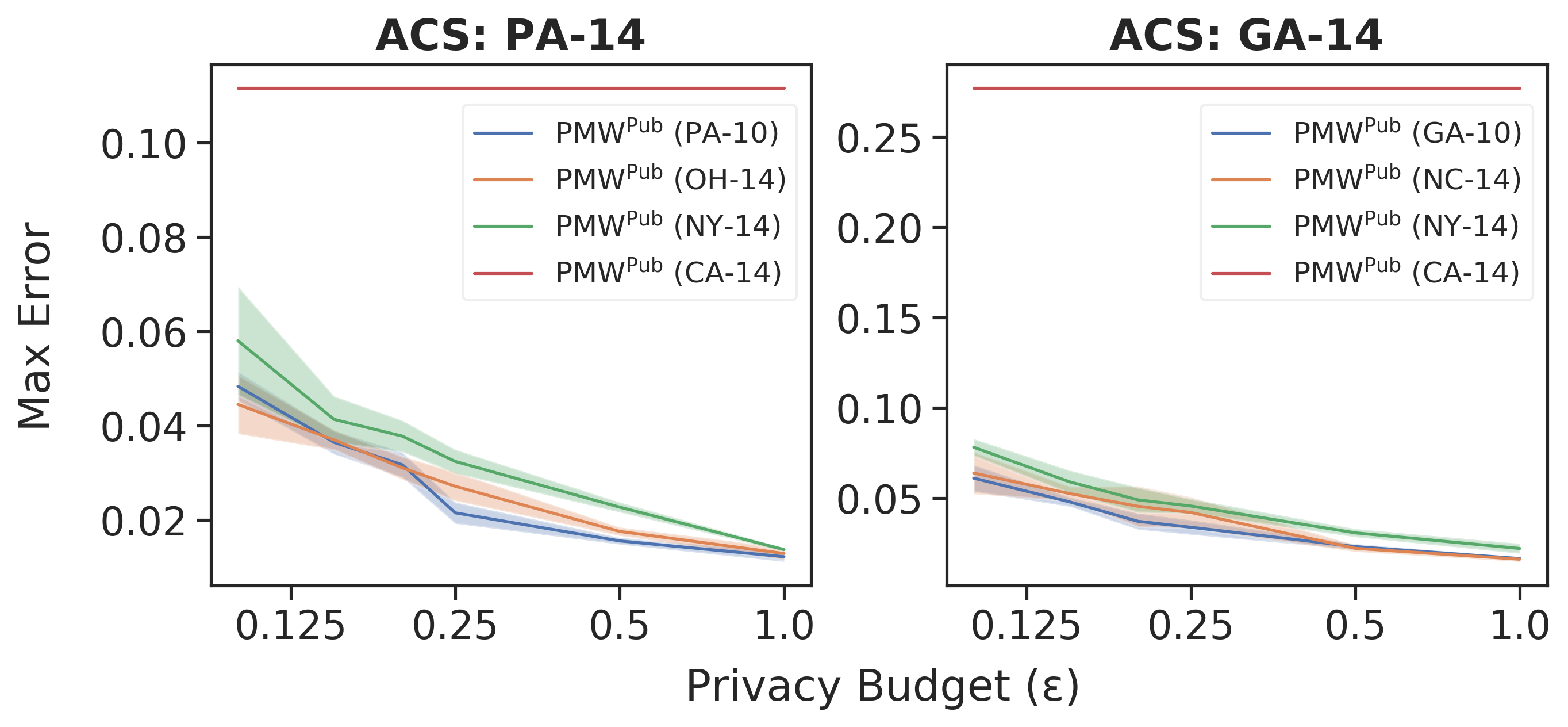}
    \caption{
    Max error on the ACS validation sets for $3$-way marginals with a workload size of $4096$ with privacy $\varepsilon \in \{ 0.1, 0.15, 0.2, 0.25, 0.5, 1 \}$ and $\delta = \frac{1}{n^2}$. Results are averaged over $5$ runs, and error bars represent one standard error. The \textit{x-axis} uses a logarithmic scale.
    \textbf{Left:} 2014 ACS for Pennsylvania.
    \textbf{Right:} 2014 ACS for Georgia.
    }
    \label{fig:acs_compare_pub_val}
\end{figure}

\begin{table*}[!t]
\centering
\caption{Max error (averaged over 5 runs, best results in \textbf{bold}) comparison on the 2018 ACS (reduced)-PA, 2018 ACS-PA, and 2018 ACS-GA datasets. At each privacy budget parametrized by $\varepsilon \in \{ 0.1, 0.15, 0.2, 0.25, 0.5, 1 \}$ and $\delta = \frac{1}{n^2}$, \ours uses the public dataset (and corresponding hyperparameter $T$) that achieves the lowest max error on the validation set.}
\setlength\tabcolsep{7pt}
\begin{small}
\begin{tabular}{l l c c c c c c }
    \toprule
    \textsc{Dataset} & \textsc{Algo}. & $\varepsilon=0.1$ & $\varepsilon=0.15$ & $\varepsilon=0.2$ & $\varepsilon=0.25$ & $\varepsilon=0.5$ & $\varepsilon=1$ \\
    \midrule
    \multirow{2}{*}{\textsc{ACS (red.)-PA}}
    & \ours & $\mathbf{0.0301}$ & $\mathbf{0.0197}$ & $\mathbf{0.0196}$ & $\mathbf{0.0172}$ & $\mathbf{0.0097}$ & $\mathbf{0.0067}$ \\
    & \dq & $0.1115$ & $0.0871$ & $0.0816$ & $0.0625$ & $0.0473$ & $0.0330$ \\
    \midrule
    \multirow{2}{*}{\textsc{ACS-PA}}
    & \ours & $\mathbf{0.0499}$ & $\mathbf{0.0458}$ & $\mathbf{0.0332}$ & $\mathbf{0.0298}$ & $\mathbf{0.0195}$ & $\mathbf{0.0141}$ \\
    & \dq & $0.2289$ & $0.1908$ & $0.1639$ & $0.1526$ & $0.1086$ & $0.0816$ \\
    \midrule
    \multirow{2}{*}{\textsc{ACS-GA}}
    & \ours & $\mathbf{0.0753}$ & $\mathbf{0.0523}$ & $\mathbf{0.0470}$ & $\mathbf{0.0380}$ & $\mathbf{0.0244}$ & $\mathbf{0.0175}$ \\
    & \dq & $0.2615$ & $0.2117$ & $0.1904$ & $0.1709$ & $0.1212$ & $0.0910$ \\
    \bottomrule
\end{tabular}
\end{small}
\label{tab:acs_compare_to_benchmarks_validated}
\end{table*}

\begin{figure*}[!t]
    \centering
    \includegraphics[scale=0.35]{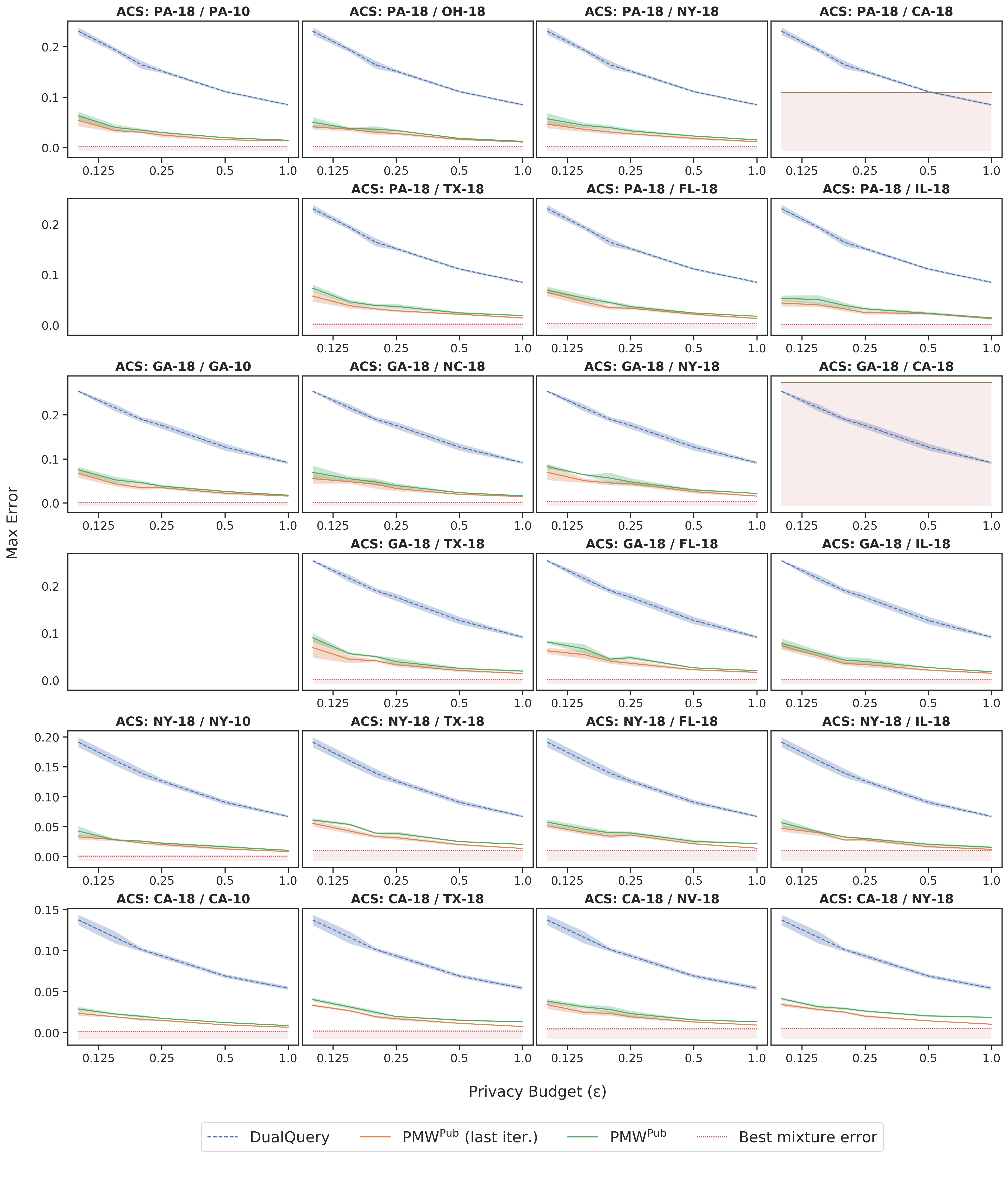}
    \caption{
    We compare \ours with the variant of \ours that outputs the last iterate $A_T$ for all experiments (3-way marginals and workload size of 4096) on the (full-sized) 2018 ACS dataset, plotting max error for $\varepsilon \in \{ 0.1, 0.15, 0.2, 0.25, 0.5, 1 \}$ and $\delta = \frac{1}{n^2}$. Results are averaged over $5$ runs, and error bars represent one standard error. The \textit{x-axis} uses a logarithmic scale. Given the support of each public dataset, we shade the area below the \textit{best mixture error} to represent max error values that are unachievable by \ours. Using the last iterate in \ours improves performance across all experiments.
    }
    \label{fig:acs_compare_benchmarks_last_iter}
\end{figure*}

\subsubsection{Evaluations using mean error and RMSE}

In Figures \ref{fig:acs_compare_benchmarks_mean} and \ref{fig:acs_compare_benchmarks_mse}, we evaluate on ACS PA-18 and ACS GA-18 with respect to mean error and mean squared error respectively. Like in Figure \ref{fig:acs_compare_benchmarks}, in which we present results with respect to max error, \ours performs well when using public datasets with low best mixture error. In the case where \ours uses a poor dataset (i.e., CA-18), we observe the performance of \ours suffers, and the algorithm does not improve even when the privacy budget is increased. However, we note that \ours still performs better than \dq in this setting (where $\varepsilon \le 1.0$).

\subsubsection{Additional results}

In Figure \ref{fig:acs_compare_benchmarks_additional}, we plot results for ACS PA-18 and ACS GA-18 comparing \ours using the 2010 ACS data (PA-10 and GA-10) with the remaining public datasets (TX-18, FL-18, IL-18) not presented in Figure \ref{fig:acs_compare_benchmarks}. In addition, we present results on 2018 ACS data for the states of New York (NY-18) and California (CA-18). To run \ours, we choose Texas (TX-18), Florida (FL-18), and Illinois (IL-18) for New York and choose Texas (TX-18), Nevada (NV-18), and New York (NY-18) for California.

\subsubsection{Using the last iterate}

In this work, we present theoretical guarantees of \ours in which we output the average distribution $A = \text{avg}_{t \le T} A_t$ (see Algorithm \ref{alg:framework}), mimicking the output in the original formulation of \mwem. However, \citet{HardtLM12} note that while they prove guarantees for this variant of \mwem, in practical settings, one can often achieve better results by outputting the distribution from the last iterate, $A_T$. In Figure \ref{fig:acs_compare_benchmarks_last_iter}, we compare \ours to the variant of \ours that outputs $A_T$ and observe that indeed, outputting the last iterate achieves better performance across all experiments (excluding those in which the best mixture error of the public dataset's support is high, i.e. CA-18).

\subsubsection{Identifying public datasets with poor support}

In Section \ref{sec:results_acs}, we describe how using Laplace noise, one can get determine the quality of a support by getting a noisy estimate of the best mixture error for any public dataset. While we emphasize that this strategy is the most principled approach to ensuring the public data is viable for \ours, we note that in settings like ours in which we have a validation set, one can apply additional sanity checks. For instance, in Figure \ref{fig:acs_compare_pub_val}, we observe that \ours performs poorly on the validation set when using CA-14, both in absolute terms and relative to the other public datasets. For demonstration purposes, we show in Table \ref{tab:acs_compare_to_benchmarks_validated} that if we select the public dataset (at each privacy budget $\varepsilon$) based solely on which public dataset performed best on the validation set, we achieve very strong results. Thus in practical settings, one can use validation sets in conjunction with the best mixture error function to find a suitable public dataset (for example, one can first filter out poor public datasets using a validation set and then find the best mixture error of any remaining candidates).



\subsection{Discussion of other baselines}\label{app:other_baselines}

\paragraph{Ji \& Elkan (2013).}
While \citet{ji2013differential}'s method reweights the support of a public dataset, their goal is not tailored towards query release. \citet{ji2013differential} instead measure the success their algorithm by evaluating the parameters learned from training a support vector machine on the synthetic dataset. Furthermore, they specifically contrast their method's goal with that of \mwem, whose objective is to optimize over a set of predefined queries. Thus, one would expect the method to perform worse than \ours for query release. To verify this hypothesis, we implement their algorithm with hyperparameter $\lambda \in \{0.005, 0.01, 0.025, 0.05, 0.1, 0.5\}$. \ours outperforms across all metrics (max, mean, and mean squared error) and privacy budgets on ACS PA-18 and GA-18 using each public dataset used to evaluate \ours in Section \ref{sec:results_acs}. For example, w.r.t. max error on PA-18, \ours outperforms by between $1.38 \times$ and $5.07 \times$ (depending on $\varepsilon$) when using PA-10 as the public dataset.

\paragraph{\hdmm.}
Unlike \mwem and \dq, which solve the query release problem by generating synthetic data, the High-Dimensional Matrix Mechanism \cite{McKennaMHM18} is designed to directly answer a workload of queries. By representing query workloads compactly, \hdmm selects a new set of ``strategy'' queries that minimize the estimated error with respect to the input workload. The algorithm then answers the ``strategy'' queries using the \textit{Laplace mechanism} and reconstructs the answers to the input workload queries using these noisy measurements.

We had originally evaluated against \hdmm. However, having consulted \citet{McKennaMHM18}, we learned that currently, running \hdmm is infeasible for the ACS and ADULT datasets. There does not exist a way to solve the least square problem described in the paper for domain sizes larger than approximately $10^9$. While a variant of \hdmm using local least squares could potentially circumvent such computational issues, there does not exist support for this version of \hdmm for more general query workloads outside of those predefined in \citet{McKennaMHM18}'s codebase. Currently, there is no timeline for when the authors will begin developing this modification that would allow us to use \hdmm as a baseline for the ADULT and ACS datasets.


\end{document}
